\newtheorem{proposition}{Proposition}
\newtheorem{definition}{Definition}
\newcommand{\loss}{\mathcal{L}}
\newcommand{\themethodlong}{Normalize-and-Project\xspace}
\newcommand{\themethod}{NaP\xspace}
\newcommand{\bx}{\mathbf{x}}
\newcommand{\btheta}{\mathbf{\theta}}
\newcommand{\by}{\mathbf{y}}
\newcommand{\bD}{\mathbf{D}}
\newcommand{\bh}{\mathbf{h}}
\newcommand{\bv}{\mathbf{v}}
\newcommand{\layernorm}{f_{\mathrm{LN}}}
\newcommand{\rmsnorm}{f_{\mathrm{RMS}}}
\definecolor{rgreen}{rgb}{0,0.4,0.1}
\title{Normalization and effective learning rates in reinforcement learning}
\author{%
Clare Lyle\thanks{Google DeepMind. Correspondence to \texttt{clarelyle@google.com}}\footnotemark[2]
 \And Zeyu Zheng\footnotemark[2]  \And Khimya Khetarpal\footnotemark[2]  \And
James Martens\footnotemark[2]  \And
Hado van Hasselt\footnotemark[2] 
\And Razvan Pascanu\footnotemark[2] \And  Will Dabney\footnotemark[2]}
\begin{document}

\maketitle

\begin{abstract}
Normalization layers have recently experienced a renaissance in the deep reinforcement learning and continual learning literature, with several works highlighting diverse benefits such as improving loss landscape conditioning and combatting overestimation bias. However, normalization brings with it a subtle but important side effect: an equivalence between growth in the norm of the network parameters and decay in the effective learning rate. This becomes problematic in continual learning settings, where the resulting effective learning rate schedule may decay to near zero too quickly relative to the timescale of the learning problem. We propose to make the learning rate schedule explicit with a simple re-parameterization which we call  Normalize-and-Project (NaP), which couples the insertion of normalization layers with weight projection, ensuring that the effective learning rate remains constant throughout training. This technique reveals itself as a powerful analytical tool to better understand learning rate schedules in deep reinforcement learning, and as a means of improving robustness to nonstationarity in synthetic plasticity loss benchmarks along with both the single-task and sequential variants of the Arcade Learning Environment. We also show that our approach can be easily applied to popular architectures such as ResNets and transformers while recovering and in some cases even slightly improving the performance of the base model in common stationary benchmarks.
\end{abstract}

\section{Introduction}
\label{sec:introduction}

Many of the most promising application areas of deep learning, in particular reinforcement learning (RL), require training on a problem which is in some way {nonstationary}. In order for this type of training to be effective, the neural network must maintain its ability to adapt to new information as it becomes available, i.e. it must remain \textit{plastic}. Several recent works have shown that loss of plasticity can present a major barrier to performance improvement in RL and in continual learning~\citep{dohare2021continual, lyle2021understanding, nikishin2022primacy}. These works have proposed a variety of explanations for plasticity loss such as the accumulation of saturated ReLU unit and increased sharpness of the loss landscape~\citep{lyle2023understanding}, along with mitigation strategies, such as resetting dead unitss~\citep{sokar2023dormant} and regularizing the parameters towards their initial values~\citep{kumar2023maintaining}. Many of these explanations and their corresponding mitigation strategies center around reducing drift in the distribution of pre-activations~\citep{lyle2024disentangling}, a problem which has historically been resolved in the supervised learning setting by incorporating normalization layers into the network architecture. Indeed, normalization layers have been shown to be highly effective at stabilizing optimization in both continual learning and RL~\citep{hussing2024dissecting, ball2023efficient}.

While effective, normalization on its own is insufficient to avoid loss of plasticity~\citep{lee2023plastic}. Part of the reason for this lies in a subtle property of normalization: a normalization layer causes the subnetwork preceding it to become scale-invariant, which means that the layer's \textit{effective learning rate} (ELR) now depends on the norm of its parameters~\citep{van2017l2}. In particular, when the norm of the parameters grows, as it typically does in neural networks trained without regularization, the effective learning rate shrinks. While this property has been studied extensively in idealized supervised learning settings~\citep{arora2018theoretical}, its practical implications on optimization in nonstationary problems have previously only been noted in the form of `saturation' of normalization layers \citep{lyle2024disentangling}.

This work begins with an analysis of the effect of layer normalization on a network's plasticity, revealing two surprising benefits. We first show that when coupled with adaptive optimizers such as Adam and RMSProp whose step size is independent of gradient norm, saturated units still receive sufficient gradient signal to make non-trivial changes to their parameters, providing the opportunity for them to recover as a natural byproduct of optimization. We go on to study the implicit learning rate schedule that layer normalization induces, arriving at the striking observation that even without an explicit learning rate schedule, the implicit learning rate decay induced by parameter norm growth in value-based deep RL algorithms is in fact critical to the optimization process. This observation yields an explanation for the dramatic performance degradations often induced by weight decay in value-based deep RL agents: certain components of the value function require a sufficiently small ELR in order to be learned, and an optimization process which does not reach this value will therefore underfit the value function in ways that can inhibit performance improvement. We further show that, while often beneficial in single-task settings, this implicit schedule can harm performance in nonstationary regimes, where the natural effective learning rate schedule drives the learning rate to trivial values too quickly relative to the task duration. 

Leveraging this insight, we propose \themethodlong (\themethod), a simple protocol which ensures that the learning rate used by the optimizer reflects the true effective learning rate on the network. \themethod avoids implicit learning rate decay, allowing us to attain impressive performance in a variety of nonstationary learning problems even without explicit regularization intended to prevent loss of plasticity. 
We conduct an empirical evaluation of \themethod, confirming that it can be applied to a variety of architectures and datasets without interfering with (indeed, in some cases even improving) performance, including 400M transformer models trained on the C4 dataset and vision models trained on CIFAR-10 and ImageNet. We conclude with a study on the Sequential Arcade Learning Environment, where \themethod demonstrates remarkable robustness to task changes and outperforms a baseline Rainbow agent with freshly initialized parameters after 400M training frames (100M optimizer steps).

\section{Background and related work}

We begin by providing background on trainability and its loss in nonstationary learning problems. We additionally give an overview of neural network training dynamics and effective learning rates.

\subsection{Training dynamics and plasticity in neural networks}

Early work on neural network initialization centered around the idea of controlling the norm of the activation vectors \citep{lecun2002efficient, glorot2010understanding, he2015delving} using informal arguments. More recently, this perspective has been formalized and expanded \citep{poole2016exponential, daniely2016towards, martens2021rapid} to include the inner-products between pairs of activation vectors (for different inputs to the network). The function that describes the evolution of these inner-products determines the network's gradients at initialization up to rotation, and this in turn determines trainability (which was shown formally in the Neural Tangent Kernel regime by \citet{xiao2020disentangling} and \citet{martens2021rapid}). A variety of initialization methods have been developed to ensure the network avoids ``shattering''~\citep{balduzzi2017shattered} or collapsing gradients~\citep{poole2016exponential, martens2021rapid, zhang2021deep}.

Once training begins, learning dynamics can be well-characterized in the infinite-width limit by the neural tangent kernel and related quantities~\citep{jacot2018neural, yang2019wide}, although in practice optimization dynamics diverge significantly from the infinite-width limit~\citep{fort2020deep}. A complementary line of work has empirically and theoretically characterized self-stabilization properties \citep{lewkowycz2020large, cohen2021gradient, agarwala2022second}, along with implicit regularization effects \citep{barrett2020implicit, smith2020generalization} from using non-infinitesimal learning rate. A variety of architectural choices can accelerate the training of extremely deep networks, including residual connections \citep{he2016deep} and normalization layers \citep{ioffe2015batch, ba2016layer}. Some additional works have aimed to replicate the benefits of normalization layers via normalization of the network \textit{parameters}~\citep{salimans2016weight, arpit2016normalization}, though layer normalization remains standard practice.

Maintaining trainability not only at initialization, but also, over the course of optimization is of critical importance in reinforcement and continual learning, and failure to do so has been extensively documented throughout the literature under the term \textit{loss of plasticity}~\citep{abbas2023loss, lyle2021understanding, dohare2021continual, sodhani2020toward, nikishin2022primacy}. This phenomenon has been shown to present a limiting factor to performance in a number of RL tasks~\citep{nikishin2023deep}, along with continual learning and warm-starting neural network training~\citep{berariu2021study, ash2020warm}. Plasticity loss can be further decomposed into two distinct components \citep{lee2023plastic}: loss of trainability and reduced generalization performance, of which we focus on the former.

\subsection{Effective learning rates}
As noted by several prior works~\citep{van2017l2, li2020exponential, li2020reconciling}, normalization introduces scale-invariance into the layers to which it is applied, where by a scale-invariant function $f$ we mean $f(c\theta,\bx) = f(\theta, \bx)$ for any positive scalar $c > 0$. This leads to the gradient scaling inversely with the parameter norm, whereby $\nabla f(c\theta) = \frac{1}{c} \nabla f(\theta)$. The intuition behind this property is simple: changing the direction of a large vector requires a greater perturbation than changing the direction of a small vector. This motivates the concept of an `effective learning rate', which provides a scale-invariant notion of optimizer step size. In the following definition, we take the approach of \citet{kodryan2022training} and assume an implicit `reference norm' of size 1 for the parameters.

\begin{definition}[Effective learning rate]
Consider a scale-invariant function $f$, parameters $\theta$ and update function $\theta_{t+1} \gets \theta_t + \eta g(\theta_t)$. Letting $\rho = \frac{1}{\|\theta\|}$, we then define the \emph{effective learning rate} $\tilde{\eta}$ as follows:
\begin{equation}
   \tilde{\eta} = \begin{cases}
\eta \rho^2, \quad \text{ if } g(\theta_t) = \nabla_\theta f(\theta_t)\\
\eta \rho, \quad \text{ if } g(\theta_t) = \frac{\nabla_\theta f(\theta_t)}{\| \nabla_\theta f(\theta_t) \|}
\end{cases}
\end{equation}
where, letting $\tilde{\theta} = \theta \frac{1}{\|\theta\|}$ we then have ${f(\tilde{\theta} + \tilde{\eta} g(\tilde{\theta})) = f({\theta} + {\eta} g(\theta))}$
\end{definition}

This suggests that regularization of the network norm can have the dual effect of increasing the effective learning rate, as noted by prior work~\citep{van2017l2, hoffer2018norm}. Several works have since offered more fine-grained theoretical analyses of such implicit learning rate schedules \citep{arora2018theoretical}, and suggested means of translating these implicit schedules into explicit ones ~\citep{li2020exponential, li2020understanding}. More recently, the work of \citet{lobacheva2021periodic} and \citet{kodryan2022training} has studied the training properties of scale-invariant networks trained with parameters constrained to the unit sphere, a training regime we expand upon in this work. 
% While great insight has been achieved on supervised tasks, the implications of such implicit learning rate schedules on continual and reinforcement learning remain comparatively under-explored. 
% This paper will reveal that, while poorly understood, these implicit learning rate schedules are fundamental to the dynamics of popular reinforcement learning algorithms such as Rainbow.

\section{Analysis of normalization layers and plasticity}
\label{sec:ln-elr-analysis}

Although widely used and studied, the precise reasons behind the effectiveness of layer normalization remain mysterious. In this section, we provide some new insights into how normalization can help neural networks to maintain plasticity by facilitating the recovery of saturated nonlinearities, and highlight the importance of controlling the parameter norm in networks which incorporate normalization layers. We leverage these insights to propose \themethodlong, a simple training protocol to maintain important statistics of the layers and gradients throughout training.
\begin{figure*}
    \centering
    \includegraphics[width=0.97\linewidth]{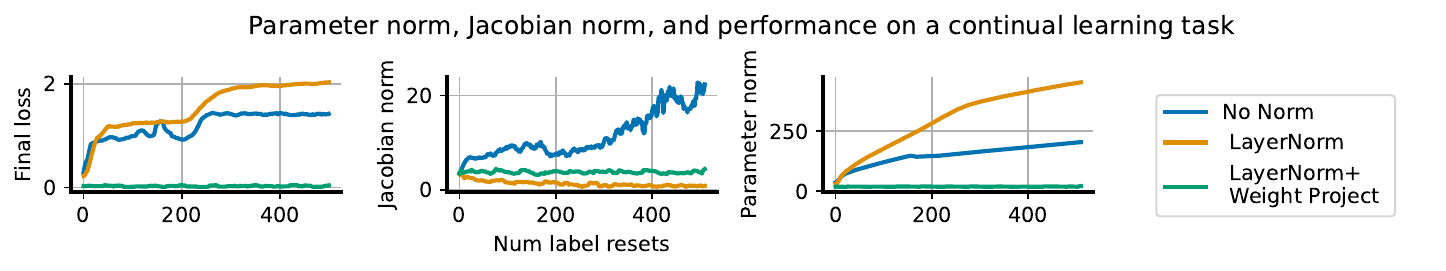}
    \caption{Continual random-labels CIFAR training: simple feedforward network architecture (No Norm) exhibits rapid growth in its parameter norm and the norm of its gradients, whereas the otherwise-identical network with layer normalization sees parameter norm growth coupled with a \textit{reduction} in the norm of its gradients and reduced performance on later tasks. Constraining the parameter norm of this network maintains the performance of a random initialization.}
    \label{fig:case_study}
\end{figure*}

\subsection{Layer normalization }
\label{sec:layernorm-analysis}

It is widely accepted that achieving approximately mean-zero, unit-variance pre-activations (assuming suitable choices of activation functions) is useful to ensure a network is trainable at initialization~\citep[e.g][]{martens2021rapid}, and many neural network initialization schemes aim to maintain this property as the depth of the network grows. Indeed, it is easy to show that in extreme cases large deviations of these statistics from their initial values can lead to a variety of network pathologies including saturated units and low numerical rank of the empirical neural tangent kernel~\citep{xiao2020disentangling}. Layer normalization not only guarantees that activations are unit-norm, mean-zero at initialization, but also that they stay that way over the course of training even if the data distribution changes, assuming no scale or offset parameters. This property not only improves robustness to a variety of pathologies that cause loss of plasticity~\citep{lyle2024disentangling}, but also helps to improve the conditioning of the network's gradients in RL~\citep{ball2023efficient}.

Beyond re-normalizing the pre-activation statistics, layer normalization also introduces a dependency between units in a given layer via the mean subtraction and division by standard deviation transformations, which translates to correlations in the gradients of their corresponding weights. This mixing step allows gradients to propagate through a pre-activation even if the unit is saturated, provided layer normalization is applied prior to the nonlinearity, a property we highlight in Proposition~\ref{prop:dead-units}.  We will use the notation %$\layernorm$ to refer to layer normalization, whereby the pre-activations $h$ are mapped to $\frac{h - \bar{h}}{\|h - \bar{h}\|}$, and 
$\rmsnorm$ to refer to the transform $h \mapsto \frac{h}{\|h\|}$, and $f'(x) \equiv \frac{\partial}{\partial x} f(x)$ to refer to the scalar derivative of any $f$ at scalar $x$.

\begin{proposition}\label{prop:dead-units}
Consider two indices $i$ and $j$ of a feature embedding $\phi(\rmsnorm(h))$ such that $\phi'(\rmsnorm(h)_j) \neq 0$, and $h_i, h_j \neq 0$. Then we have
\begin{align*}
    \frac{d}{d h_i} \phi(\rmsnorm(h))_j &= -\phi'(\rmsnorm(h)_j)\frac{1}{\|h\|^{3}} h_i h_j \neq 0 \; .
\end{align*}
In contrast, for post-activation normalization the gradient is zero whenever $\phi'(h_i)=0$, taking the form ${\partial_{h_i} \rmsnorm(\phi(h))_j=-\phi'(h_i)\frac{1}{\|\phi(h)\|^{3}} \phi(h_i) \phi(h_j)}$.
\end{proposition}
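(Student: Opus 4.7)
The plan is to unpack both sides by straightforward chain rule, after first computing the Jacobian of the RMSNorm map $\rmsnorm(h) = h/\|h\|$ once and for all. Concretely, I will start by showing
\begin{equation*}
\frac{\partial}{\partial h_i}\rmsnorm(h)_k \;=\; \frac{\delta_{ik}}{\|h\|} \;-\; \frac{h_i h_k}{\|h\|^{3}},
\end{equation*}
which follows from $\partial_{h_i}\|h\| = h_i/\|h\|$ and the quotient rule. Under the proposition's hypothesis that $i$ and $j$ are two (distinct) indices, the Kronecker term vanishes and we are left with $-h_i h_j/\|h\|^{3}$.

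For the pre-activation case $\phi(\rmsnorm(h))_j = \phi(\rmsnorm(h)_j)$ (since $\phi$ is applied elementwise), the scalar chain rule gives
\begin{equation*}
\frac{d}{d h_i}\phi(\rmsnorm(h))_j \;=\; \phi'(\rmsnorm(h)_j)\cdot\frac{\partial}{\partial h_i}\rmsnorm(h)_j \;=\; -\phi'(\rmsnorm(h)_j)\,\frac{h_i h_j}{\|h\|^{3}},
\end{equation*}
which is the stated formula. The non-vanishing claim then follows immediately: by assumption $\phi'(\rmsnorm(h)_j)\neq 0$, $h_i\neq 0$, and $h_j\neq 0$, while $\|h\|>0$ since $h_j\neq 0$, so the product is nonzero. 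The key qualitative point I will emphasize is that the $i$-th unit can still contribute to the derivative of the $j$-th output through the shared normalization denominator even when it has, say, been clipped by a ReLU in post-activation form.

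For the post-activation comparison $\rmsnorm(\phi(h))_j = \phi(h_j)/\|\phi(h)\|$, I would apply the Jacobian formula derived above to the vector $\phi(h)$ in place of $h$, together with the elementwise chain rule $\partial_{h_i}\phi(h)_k = \phi'(h_i)\,\delta_{ik}$. Taking $i\neq j$ so that the diagonal part drops out yields
\begin{equation*}
\frac{\partial}{\partial h_i}\rmsnorm(\phi(h))_j \;=\; -\phi'(h_i)\,\frac{\phi(h_i)\,\phi(h_j)}{\|\phi(h)\|^{3}},
\end{equation*}
so the factor $\phi'(h_i)$ annihilates the whole expression whenever the $i$-th unit is saturated, as claimed.

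There is no real technical obstacle here; the argument is a two-line Jacobian calculation and a chain rule. The only thing to be careful about is clarifying the role of the assumption that $i$ and $j$ are two distinct indices (otherwise the $\delta_{ij}/\|h\|$ term survives and contributes an extra diagonal piece to both formulas), and noting that all denominators are well-defined because $h_j\neq 0$ ensures $\|h\|>0$ in the first case, and $\phi(h_j)\neq 0$ in the post-activation case requires the mild additional remark that $\phi(h_j)$ appears in the numerator, so the formula is vacuously zero whenever $\phi(h)=0$.
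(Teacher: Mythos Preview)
Your proposal is correct and follows essentially the same approach as the paper: compute the Jacobian of $\rmsnorm$ via the quotient rule and then apply the elementwise chain rule. If anything you are slightly more careful than the paper's appendix derivation, since you make the $i\neq j$ assumption explicit (to kill the $\delta_{ij}/\|h\|$ term) and you write out the post-activation case in full, whereas the paper only spells out the pre-activation computation.
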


In other words, normalization effectively gives dead ReLU units a second chance at life -- rather than immediately decaying to zero, the gradients flowing through a dead ReLU unit will instead take small, non-zero values, which depends on the gradients of the mean and variance of that particular layer. These gradients will be much smaller than those that would typically backpropagate to the unit, but if an optimizer such as Adam or RMSProp is used to correct for the gradient norm, then the unit may still be able to take nontrivial steps, which have a chance at propelling it back into the activated regime. We include the full derivation in Appendix~\ref{sec:prop-proof}, and we demonstrate the effect this can have on both a theoretical model and empirical neural network training in Figure~\ref{fig:layernorm-dead-units} in Appendix~\ref{sec:rmsnorm-dead-units}. This property is also naturally inherited by layer normalization, which can be viewed as the composition of RMSNorm with a centering transform. While layer normalization can help the network to recover from saturated nonlinearities, it introduces a new source of potential saturation which must be carefully considered, which is something we will do in the next section.  

\subsection{Parameter norm and effective learning rate decay}
\label{sec:elr-decay}
While the \textit{output} of a scale-invariant function is insensitive to scalar multiplication of the parameters, its \textit{gradient} magnitude scales inversely with the parameter norm. This results in the opposite behaviour of what we would expect in an unnormalized network: in networks with layer normalization, growth in the norm of the parameters corresponds to a decline in the network's sensitivity to changes in these parameters. In a sense this is preferable, as the glacially slow but stable regime of vanishing gradients is easier to recover from than the unstable exploding gradient regime. However, if the parameter norm grows indefinitely then the corresponding reduction in the effective learning rate will eventually cause noticeable slowdowns in learning -- indeed, it is quite easy to induce this type of situation as we show in Figure~\ref{fig:case_study}. To do so, we take a small base convolutional neural network architecture (detailed in Appendix~\ref{appx:vision_details}) and train it on random labels of the CIFAR-10 dataset, akin to the classic setting of \citet{zhang2021understanding}. We then re-randomize these labels and continue training, repeating this process 500 times. When we apply this process to a network without normalization layers, the Jacobian norm grows to unstable values as the parameter norm increases; in contrast, an equivalent architecture with normalization layers sees a sharp decline in the Jacobian norm as the parameter norm increases. In both cases, the end result is similar: increased parameter norm accompanies reduced performance.

While this particular problem is artificial, it is a real and widely observed underlying phenomenon that the magnitudes of neural network parameters tend to increase over the course of training~\citep{nikishin2022primacy, abbas2023loss}.
In a supervised learning problem, where one is using a fixed training budget, the ELR decay induced by growing parameter norms might be desirable and help to protect against too-large learning rates~\citep{arora2018theoretical, salimans2016weight}. Allowed to continue to extremes, however, ELR decay becomes problematic~\citep{lyle2024disentangling}.
Instead, if we re-normalize the parameters after every task (which does not affect the output of our scale-invariant model) so they recover the norm they had at initialization (but do not change their direction), we see in Figure~\ref{fig:case_study} that we can maintain the performance of a random initialization even after hundreds of training iterations. 

\setcounter{algorithm}{-2}
\begin{algorithm}[tb]
   \caption{\themethod}
   \label{alg:themethod}
\begin{minipage}{0.485\textwidth}
\begin{algorithmic}
   \STATE {\bfseries Input:} network $\mathcal{N}$, input $x$
   \FOR {nonlinearity $\phi_l$ in network}
   \IF{$\phi_l$ not already normalized}
   \STATE $\phi_l \gets \phi_l \circ \mathrm{LayerNorm}$
   \ENDIF
   \ENDFOR
   \STATE compute $\theta' = \texttt{update}(\theta)$
   \FOR {parameter $W_l$ in network}
   \STATE compute $W_l \gets \mathrm{WeightProject}(W_l)$
   \ENDFOR
\end{algorithmic}
\caption{}
\end{minipage}
\begin{minipage}{0.485\textwidth}
\begin{algorithmic}
   \STATE WeightProject($W_l, \rho_l):$
   \IF{$W_l$ is a weight parameter}
   \STATE $W_l \gets \frac{\rho_l W_l}{\|W_;\|} $
  \ELSE 
   \STATE $\sigma_l, \mu_l \gets W_l$, $d \gets \text{len}(\sigma_l)$
   \STATE $ \sigma_l \gets \sigma_l \sqrt{\frac{d}{\|\sigma_l\|^2 + \|\mu_l\|^2}} $
  \STATE $ \mu_l \gets \mu_l\sqrt{ \frac{d}{\|\sigma_l\|^2 + \|\mu_l\|^2}} $
  \ENDIF
\end{algorithmic}
\caption{\themethod: \themethodlong}
\end{minipage}
\end{algorithm}

\subsection{\themethodlong}
\label{sec:themethod}
We conclude from the above investigation that normalizing a network's pre-activations and fixing the parameter norm presents a simple but effective defense against loss of plasticity. In this section, we propose a principled approach to combine these two steps which we call \themethod. Our goal for \themethod is to provide a flexible recipe which can be applied to essentially any architecture, and which improves the stability of training, motivated by but not limited to non-stationary problems. Our approach can be decomposed into two steps: the \textbf{insertion of normalization layers} prior to nonlinearities in the network architecture, and the \textbf{periodic projection} of the network's weights onto a fixed-norm radius throughout training, along with a corresponding update to the per-layer learning rates into the optimization process. Algorithm~\ref{alg:themethod} provides an overview of \themethod. While some design choices, such as the use of scale and offset terms, can be tailored to a particular problem setting, we will aim to provide principled guidance on how to reason about the effects of these choices.

\textbf{Layer normalization:} Introducing layer normalization allows us to benefit from the properties discussed in Section~\ref{sec:layernorm-analysis}, though fully benefiting from these properties depend on normalization being applied each time a linear transform precedes a nonlinearity. While it might seem extreme, this proposal is in line with most popular language and vision architectures. For example, \citet{vaswani2017attention} apply normalization after every two fully-connected layers, and recent results suggesting that adding normalization to the key and query matrices in attention heads~\citep{henry2020query} can provide further benefits.

\textbf{Weight projection:} As discussed in Section~\ref{sec:elr-decay}, we must then take care in controlling the network's effective learning rate. We propose disentangling the parameter norm from the effective learning rate by enforcing a constant norm on the weight parameters of the network, allowing scaling of the layer outputs to depend only on the learnable scale and offset parameters. This approach is similar to that proposed by \citet{kodryan2022training}, but importantly takes care to treat the scale and offset parameters separately from weights. For simplicity, we remove bias terms as these are made redundant by the learnable offset parameters. In order to maintain constant parameter norm, we rescale the parameters of each layer to match their initial norm periodically throughout training -- the precise frequency is not important as long as the parameter norm does not meaningfully grow to a point of slowing optimization between projections. For example, we find that in Rainbow agents an interval of 1000 steps and 1 step produce nearly identical empirical results.  

We find it is absolutely critical to normalize the weight parameters, as these represent the bulk of trainable parameters in the network. The learnable scale and offset parameters, assuming they are included in the network\footnote{We observed in many of our experiments that removing trainable scale and offset parameters often has little effect on network performance. In Rainbow agents, for example, removing the trainable offset parameter even {improves} performance in several environments.}, permit more flexibility and can be dealt with in one of three ways. The strictly correct version in the case of homogeneous activations such as ReLUs is the approach we outline in Appendix~\ref{sec:ln-scale-offset}, which projects the concatenated scale-offset vector. This can require some effort to implement due to the dependency between the scale and offset parameters and may not be worthwhile for small training runs -- indeed, most of our empirical results did not require this step, though we include it in our analysis of smaller networks in Figure~\ref{fig:learning-rates}. A softer version of this normalization which requires less implementation overhead and which generalizes to non-homogeneous activations is to regularize the scale and offset parameters to their initial values, a strategy which we employ in our continual learning evaluations. Finally, they can be allowed to drift unconstrained from their initialization, a choice we find unproblematic in most shorter, stationary learning problems. For a more detailed discussion on this choice, we refer to Appendix~\ref{sec:ln-scale-offset}. 
\begin{figure}
    \centering
    
    \includegraphics[width=0.97\linewidth]{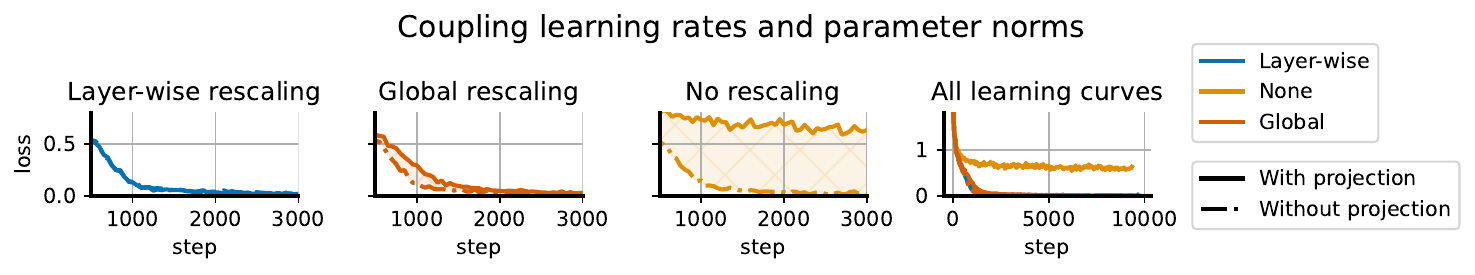}
    \caption{We run a `coupled networks' experiment as described in the text. All networks exhibit similar learning curves, as seen by the rightmost subplot, however there is small but visible gap between the learning curves obtained by \themethod and an unconstrained network with fixed learning rates. Using a global learning rate schedule almost entirely closes this gap, but does not induce a precise equivalence in the dynamics as obtained by layer-wise rescaling (leftmost).}
    \label{fig:learning-rates}
\end{figure}
\section{Lessons on the effective learning rate}
\themethod constrains the network's effective learning rate to follow an explicit rather than implicit schedule. In this section, we explore how this property affects network training dynamics, demonstrating how implicit learning rate schedules due to parameter norm growth can be made explicit and be leveraged to improve the performance of \themethod in deep RL domains. 

\subsection{Replicating the dynamics of parameter norm growth in \themethod}
\label{sec:elr-equivalence}
We begin our study of effective learning rates by illustrating how the implicit learning rate schedule induced by the evolution of the parameter norm can be translated to an explicit schedule in \themethod. We study a small CNN described in Appendix~\ref{appx:vision_details} with layer normalization prior to each nonlinearity trained on CIFAR-10 with the usual label set. We train two `twin' scale-invariant networks with the Adam optimizer in tandem: both networks see the exact same data stream and start from the same initialization, but the per-layer weights of one are projected after every gradient step to have constant norm, while the other is allowed to vary the norms of the weights. We then consider three experimental settings: in the first, we re-scale the per-layer learning rates of the projected network so that the explicit learning rate is equal to the effective learning rate of its twin. In the second, we re-scale the global learning rate based on the ratio of parameter norms between the projected and unprojected network, but do not tune per-layer. In the third, we do no learning rate re-scaling. We see in Figure~\ref{fig:learning-rates} that the shapes of the learning curves for all networks except for the constant-ELR variant are quite similar, with the global learning rate scaling strategy producing a smaller gap than the no-rescaling strategy. By construction, the dynamics of the per-layer rescaling network and its twin are identical. Because global learning rate schedules are standard practice and induce dynamics that are quite close to those obtained by parameter norm growth in Figure~\ref{fig:learning-rates}, we take this approach in the remainder of the paper, leaving layer-specific learning rates and schedules for future work.

\subsection{Implicit learning rate schedules in deep RL}
\label{sec:implicit-lrs-atari}
\begin{figure}
    \centering
    \includegraphics[width=1.0\linewidth]{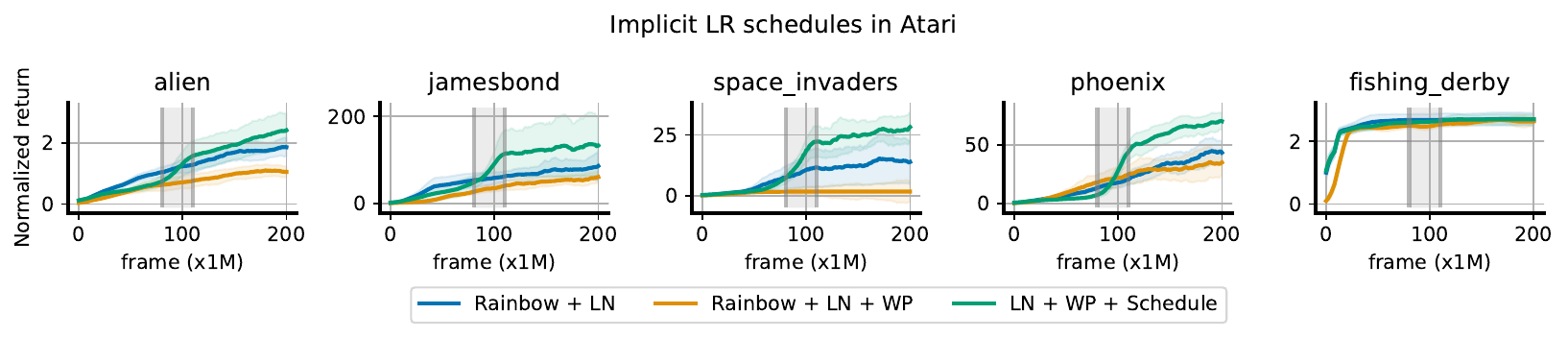}
    \caption{
    Without an explicit learning rate schedule, a Rainbow trained with \themethod may fail to make any performance improvement; while the implicit schedule induced by the parameter norm is clearly important to performance, in several games this is significantly outperformed by a simple linear schedule terminating halfway through training. Intriguingly, we see a characteristic sharp improvement near the end of the decay schedule in several (though not all, e.g. fishing derby) games.
    }
    \label{fig:implicit-lr-schedules}
\end{figure}
When taken to extremes, learning rate decay will eventually prevent the network from making nontrivial learning progress. However, learning rate decay plays an integral role in the training of many modern architectures, and is required to achieve convergence for stochastic training objectives (unless the interpolation applies or Polyak averaging is employed). In this section we will show that, perhaps unsurprisingly, naive application of \themethod with a constant effective learning rate can sometimes harm performance in settings where the implicit learning rate schedule induced by parameter norm growth was in fact critical to the optimization process. More surprising is that the domain where this phenomenon is most apparent is one where common wisdom would suggest learning rate decay would be undesirable: deep RL.

RL involves a high degree of nonstationarity.
As a result, deep RL algorithms such as DQN and Rainbow often use a constant learning rate schedule. Given that layer normalization has been widely observed to improve performance in value-based RL agents on the arcade learning environment, and that parameter norm tends to increase significantly in these agents, one might at first believe that the performance improvement offered by layer normalization is happening in \textit{spite} of the resulting implicit learning rate decay. A closer look at the literature, however, reveals that several well-known algorithms such as AlphaZero~\citep{schrittwieser2020mastering}, along with many implementations of popular methods such as Proximal Policy Optimization~\citep{schulman2017proximal}, incorporate some form of learning rate decay, suggesting that a constant learning rate is not always desirable. Indeed, Figure~\ref{fig:implicit-lr-schedules} shows that constraining the parameter norm to induce a fixed ELR in the Rainbow agent frequently results in \textit{worse} performance compared to unconstrained parameters. This is particularly striking given that many of the benefits supposedly provided by layer normalization, such as better conditioning of the loss landscape~\citep{lyle2023understanding} and mitigation of overestimation bias~\citep{ball2023efficient}, should be independent of the effective learning rate. Instead, these properties appear to either be irrelevant for optimization or dependent on reductions in the effective learning rate.

We can close this gap by introducing a learning rate schedule (linear decay from the default $6.25 \cdot 10^{-5}$ to $10^{-6}$, roughly proportional to the average parameter norm growth across games). We further observe in Appendix~\ref{appx:atari-lrs} that when we vary the endpoint of the learning rate schedule, we often obtain a corresponding x-axis shift in the learning curves, suggesting that reaching a particular learning rate was necessary to master some aspect of the game. We conclude that, while beneficial, the implicit schedule induced by the parameter norm is not necessarily \textit{optimal} for deep RL agents, and it is possible that a more principled adaptive approach could provide still further improvements.

\section{Experiments}
\label{sec:experiments}
We now validate the utility of \themethod empirically. Our goal in this section is to validate two key properties: first, that \themethod does not hurt performance on stationary tasks; second, that \themethod can mitigate plasticity loss under a variety of both synthetic and natural nonstationarities.

\subsection{Robustness to nonstationarity}
\label{sec:experiments-cmnist}
We begin with the continual classification problem described in Appendix~\ref{appx:vision_details}. We evaluate our approach on a variety of sources of nonstationarity, using two architectures: a small CNN, and a fully-connected MLP (see Appendix~\ref{appx:vision_details}. for details). We first evaluate a number of methods designed to maintain plasticity including Regenerative regularization~\citep{kumar2023maintaining}, Shrink and Perturb~\citep{ash2020warm}, ReDo~\citep{sokar2023dormant}, along with leaky ReLU units (inspired by the CReLU trick of \citet{abbas2023loss}), L2 regularization, and random Gaussian perturbations to the optimizer update, a heuristic form of Langevin Dynamics. We track the average online accuracy over the course of training for 20M steps, equivalent to 200 data relabelings, using a constant learning rate. We find varying degrees of efficacy in these approaches in the base network architectures, with regenerative regularization and ReDO tending to perform the best. When we apply the same suite of methods to networks with \themethod, in Figure~\ref{fig:memorization}, we observe near-monotonic improvements (with the exception of ReDO, which we conjecture is because the reset method designed for unnormalized networks) in performance and a significant reduction in the gaps between methods, with the performance curves of the different methods nearly indistinguishable in the MLP. Further, we observe constant or increasing slopes in the online accuracy, suggesting that the difference between methods has more to do with their effect on within-task performance than on plasticity loss once the parameter and layer norms have been constrained.

% We provide additional analysis in Appendix~\ref{appx:supp-results}, illustrating the evolution of quantities such as stable feature rank and saturated units, evaluating against additional baselines, and demonstrating that the small drop in CNN performance at the start of training with FlowNorm can be mitigated by appropriate step size tuning.

\begin{figure*}
    \centering
    \includegraphics[width=0.94\linewidth]{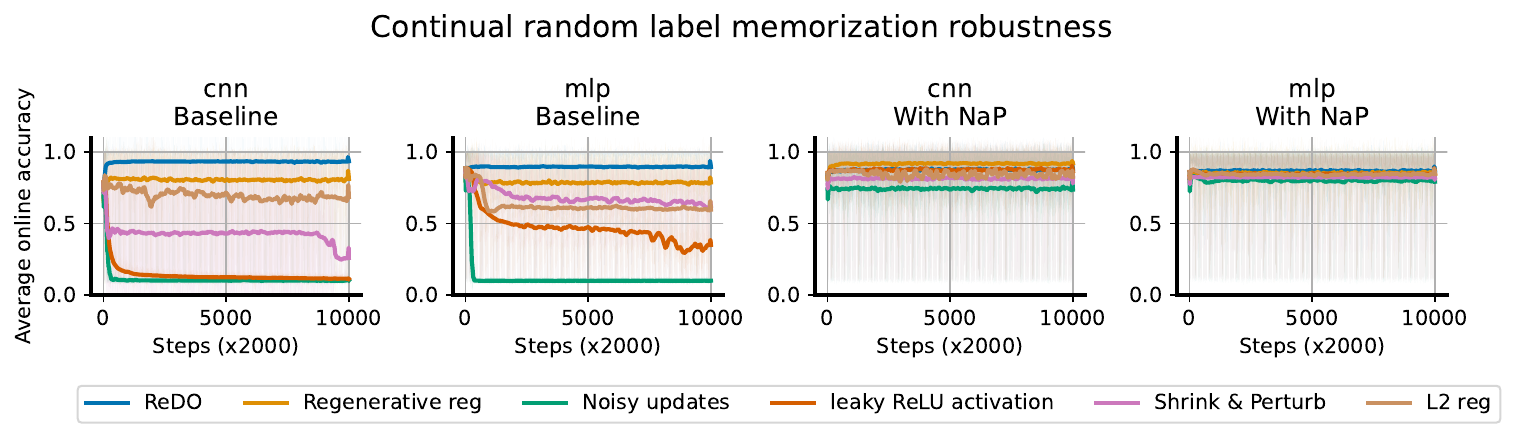}
    
    \caption{\textbf{Robustness to nonstationarity:} we see that without \themethod, there is a wide spread in the effectiveness of various plasticity-preserving methods across two architectures. Once we incorporate \themethod, however, the gaps between these methods shrink significantly and almost uniformly improves over the unconstrained baseline.}
    \label{fig:memorization}
\end{figure*}
\subsection{Stationary supervised benchmarks}
\label{sec:intefere-with-learning}
Having observed remarkable improvements in synthetic tasks, we now confirm that \themethod does not interfere with learning on more widely-studied, natural datasets.

\begin{table}\footnotesize
\centering
\begin{tabular}{|l|l|l|l|l|l|l|l|l|l|}
\cline{1-3} \cline{5-10}
 & \textbf{CIFAR-10} & \textbf{ImageNet-1k} &  & \textbf{C4} & \textbf{Pile} & \textbf{WikiText} & \textbf{Lambada} & \textbf{SIQA} & \textbf{PIQA} \\ \cline{1-3} \cline{5-10} 
\themethod & \textbf{94.64} & 77.26 & & \textbf{45.7} & \textbf{47.9} & \textbf{45.4} & \textbf{56.6 } & \textbf{44.2} & \textbf{68.8} \\ \cline{1-3} \cline{5-10} 
Baseline & \textbf{94.65} & 77.08 & & 44.8 & 47.4 & 44.2 & 54.1 & 43.5 & 67.3  \\ \cline{1-3} \cline{5-10} 
Norm only & 94.47 & \textbf{77.45} & & 44.9 & 47.6 & 44.3 & 53.6 & 43.8 & 67.1  \\ \cline{1-3} \cline{5-10} 
\end{tabular}
\vspace{1em}
\caption{
\textbf{Left:} Top-1 prediction accuracy on the test sets of CIFAR-10 and ImageNet-1k. \textbf{Right:} per-token accuracy of a 400M transformer model pretrained on the C4 dataset, evaluated on a variety of language benchmarks.
See Appendix~\ref{appx:supervised} for more results with variation measures.}
\vspace{-1em}
\label{tab:supervised}
\end{table}
\textbf{Large-scale image classification.} 
We begin by studying the effect of \themethod on two well-established benchmarks: a VGG16-like network~\citep{simonyan2014very} on CIFAR-10, and a ResNet-50~\citep{he2016deep} on the ImageNet-1k dataset. We provide full details in Appendix~\ref{appx:vision_details}.
In Table~\ref{tab:supervised} we obtain comparable performance in both cases using the same learning rate schedule as the baseline.

\textbf{Natural language:} we now turn our attention to language tasks, training a 400M-parameter transformer architecture (details in Appendix~\ref{appx:language-details}) on the C4 benchmark~\citep{raffel2020exploring}. Table~\ref{tab:supervised} shows that our approach does not interfere with performance on this task, where we match final performance in terms of training accuracy. When evaluating the pre-trained network on a variety of other datasets, we find that \themethod slightly outperforms baselines in terms of performance on a variety of benchmarks, including WikiText-103, Lambada~\citep{paperno2016lambada}, piqa~\citep{bisk2020piqa}, SocialIQA~\citep{sap2019socialiqa}, and Pile~\citep{gao2020pile}. 

\subsection{Deep reinforcement learning}
\label{sec:deeprl}
\begin{figure}
    \centering
    \begin{minipage}{0.74\linewidth}
    \includegraphics[width=\linewidth]{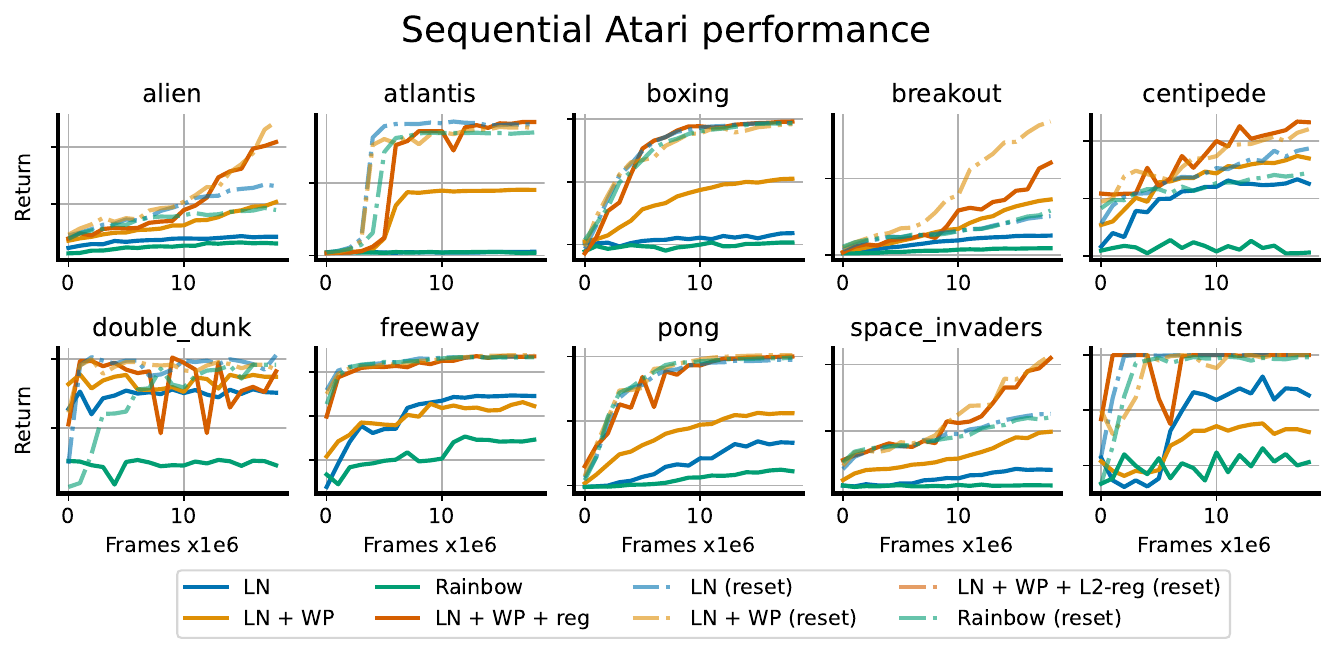}
    \end{minipage}\begin{minipage}{0.24\linewidth}
    \includegraphics[width=\linewidth]{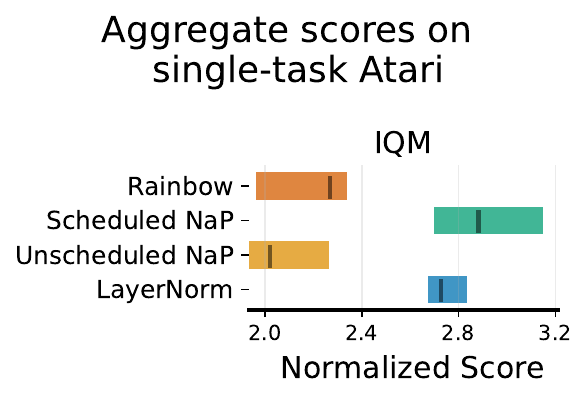}
    \includegraphics[width=\linewidth]{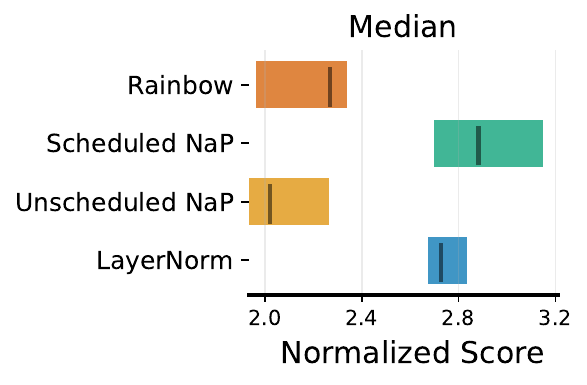}
    \end{minipage}
    \caption{\textbf{Left:} We visualize the learning curves of continual atari agents on sequential ALE training (i.e. 200M frames). Each game is played for 20M frames, and agents pass sequentially from one to another, repeating all ten games twice for a total of 400M training frames. Solid lines indicate performance on the second visit to each game, and dotted lines indicate performance of a randomly initialized network on the game. Even in its second visit to each game, \themethod performs comparably the randomly initialized networks, whereas the standard rainbow agent exhibits poor performance on all games in the sequential training regime. \textbf{Right:} aggregate effects of normalization on single-task atari, computed via the approach of \citet{agarwal2021deep}. Bars indicate 95\% confidence intervals over 4 seeds and 57 environments.}\vspace{-1em}
    \label{fig:deep-continual-rl}
\end{figure}
Finally, we evaluate our approach on a setting where maintaining plasticity is critical to performance: RL on the Arcade Learning Environment. We conduct a full sweep over 57 Atari 2600 games comparing the effects of normalization, weight projection, and learning rate schedules on a Rainbow agent~\citep{hessel2018rainbow}. In the RHS of Figure~\ref{fig:deep-continual-rl} we plot the spread of scores, along with estimates of the Mean and IQM of four agents: standard Rainbow, Rainbow + LayerNorm, Rainbow + \themethod without an explicit LR schedule, and Rainbow + \themethod with the LR schedule described in Section~\ref{sec:implicit-lrs-atari}. We find that \themethod with a linear schedule outperforms the other methods.

We also consider the sequential setting of \citet{abbas2023loss}. In this case, we consider an idealized setup where we reset the optimizer state and schedule every time the environment changes, using a cosine schedule with warmup described in Appendix~\ref{appx:RL-details}. To evaluate \themethod on this regime, we train on each of 10 games for 20M frames, going through this cycle twice. We do not reset parameters of the continual agents between games, but do reset the optimizer. We plot learning curves for the second round of games in the LHS of Figure~\ref{fig:deep-continual-rl}, finding that \themethod significantly outperforms a baseline Rainbow agent with and without layer normalization. Indeed, even after 200M steps the networks trained with \themethod make similar learning progress to a random initialization. 

\section{Discussion}

This paper has shown that loss of plasticity can be substantially mitigated by constraining the norms of the network's layers and parameters. This finding was made possible by two key insights: first, that in addition to the obvious benefits relating to maintaining constant (pre-)activation norms, layer normalization can also protect against saturated units, and second, that it introduces a correspondence between the parameter norm and the effective learning rate which has significant consequences on performance. We proposed \themethod as a means of making the effective learning rate explicit in the optimization process, and leveraged this to gain new insights into the importance of learning rate decay schedules in deep reinforcement learning. In particular, we showed that Rainbow agents trained on the Arcade Learning Environment in fact under-perform their unconstrained counterparts when a constant effective learning rate is enforced. Provided a suitable decay schedule is used, however, a network's performance and robustness to nonstationarity can both be improved by using \themethod. Our analysis suggests a number of exciting directions for future work; in particular, the use of adaptive learning rate schedules in reinforcement learning, and the possibility of tuning not only the global learning rate but also per-layer learning rates in networks which use normalization layers.

\section*{Broader Impact}
This work concerns basic properties of optimization of neural networks. We do not anticipate any broader societal impacts as a direct consequence of our findings.
\bibliography{main.bib}

\begin{thebibliography}{66}
\providecommand{\natexlab}[1]{#1}
\providecommand{\url}[1]{\texttt{#1}}
\expandafter\ifx\csname urlstyle\endcsname\relax
  \providecommand{\doi}[1]{doi: #1}\else
  \providecommand{\doi}{doi: \begingroup \urlstyle{rm}\Url}\fi

\bibitem[Abbas et~al.(2023)Abbas, Zhao, Modayil, White, and
  Machado]{abbas2023loss}
Zaheer Abbas, Rosie Zhao, Joseph Modayil, Adam White, and Marlos~C Machado.
\newblock Loss of plasticity in continual deep reinforcement learning.
\newblock \emph{ICML}, 2023.

\bibitem[Agarwal et~al.(2021)Agarwal, Schwarzer, Castro, Courville, and
  Bellemare]{agarwal2021deep}
Rishabh Agarwal, Max Schwarzer, Pablo~Samuel Castro, Aaron~C Courville, and
  Marc Bellemare.
\newblock Deep reinforcement learning at the edge of the statistical precipice.
\newblock \emph{Advances in neural information processing systems},
  34:\penalty0 29304--29320, 2021.

\bibitem[Agarwala et~al.(2022)Agarwala, Pedregosa, and
  Pennington]{agarwala2022second}
Atish Agarwala, Fabian Pedregosa, and Jeffrey Pennington.
\newblock Second-order regression models exhibit progressive sharpening to the
  edge of stability.
\newblock \emph{arXiv preprint arXiv:2210.04860}, 2022.

\bibitem[Arora et~al.(2018)Arora, Li, and Lyu]{arora2018theoretical}
Sanjeev Arora, Zhiyuan Li, and Kaifeng Lyu.
\newblock Theoretical analysis of auto rate-tuning by batch normalization.
\newblock In \emph{International Conference on Learning Representations}, 2018.

\bibitem[Arpit et~al.(2016)Arpit, Zhou, Kota, and
  Govindaraju]{arpit2016normalization}
Devansh Arpit, Yingbo Zhou, Bhargava Kota, and Venu Govindaraju.
\newblock Normalization propagation: A parametric technique for removing
  internal covariate shift in deep networks.
\newblock In Maria~Florina Balcan and Kilian~Q. Weinberger, editors,
  \emph{Proceedings of The 33rd International Conference on Machine Learning},
  volume~48 of \emph{Proceedings of Machine Learning Research}, pages
  1168--1176, New York, New York, USA, 20--22 Jun 2016. PMLR.

\bibitem[Ash and Adams(2020)]{ash2020warm}
Jordan Ash and Ryan~P Adams.
\newblock On warm-starting neural network training.
\newblock \emph{Advances in Neural Information Processing Systems},
  33:\penalty0 3884--3894, 2020.

\bibitem[Ba et~al.(2016)Ba, Kiros, and Hinton]{ba2016layer}
Jimmy~Lei Ba, Jamie~Ryan Kiros, and Geoffrey~E Hinton.
\newblock Layer normalization.
\newblock \emph{arXiv preprint arXiv:1607.06450}, 2016.

\bibitem[Balduzzi et~al.(2017)Balduzzi, Frean, Leary, Lewis, Ma, and
  McWilliams]{balduzzi2017shattered}
David Balduzzi, Marcus Frean, Lennox Leary, JP~Lewis, Kurt Wan-Duo Ma, and
  Brian McWilliams.
\newblock The shattered gradients problem: If resnets are the answer, then what
  is the question?
\newblock In \emph{International Conference on Machine Learning}, pages
  342--350. PMLR, 2017.

\bibitem[Ball et~al.(2023)Ball, Smith, Kostrikov, and
  Levine]{ball2023efficient}
Philip~J. Ball, Laura Smith, Ilya Kostrikov, and Sergey Levine.
\newblock Efficient online reinforcement learning with offline data.
\newblock In Andreas Krause, Emma Brunskill, Kyunghyun Cho, Barbara Engelhardt,
  Sivan Sabato, and Jonathan Scarlett, editors, \emph{Proceedings of the 40th
  International Conference on Machine Learning}, volume 202 of
  \emph{Proceedings of Machine Learning Research}, pages 1577--1594. PMLR,
  23--29 Jul 2023.

\bibitem[Barrett and Dherin(2020)]{barrett2020implicit}
David~GT Barrett and Benoit Dherin.
\newblock Implicit gradient regularization.
\newblock \emph{arXiv preprint arXiv:2009.11162}, 2020.

\bibitem[Bellemare et~al.(2013)Bellemare, Naddaf, Veness, and
  Bowling]{bellemare2013arcade}
Marc~G Bellemare, Yavar Naddaf, Joel Veness, and Michael Bowling.
\newblock The arcade learning environment: An evaluation platform for general
  agents.
\newblock \emph{Journal of Artificial Intelligence Research}, 47:\penalty0
  253--279, 2013.

\bibitem[Berariu et~al.(2021)Berariu, Czarnecki, De, Bornschein, Smith,
  Pascanu, and Clopath]{berariu2021study}
Tudor Berariu, Wojciech Czarnecki, Soham De, Jorg Bornschein, Samuel Smith,
  Razvan Pascanu, and Claudia Clopath.
\newblock A study on the plasticity of neural networks.
\newblock \emph{arXiv preprint arXiv:2106.00042}, 2021.

\bibitem[Bisk et~al.(2020)Bisk, Zellers, Gao, Choi, et~al.]{bisk2020piqa}
Yonatan Bisk, Rowan Zellers, Jianfeng Gao, Yejin Choi, et~al.
\newblock Piqa: Reasoning about physical commonsense in natural language.
\newblock In \emph{Proceedings of the AAAI conference on artificial
  intelligence}, volume~34, pages 7432--7439, 2020.

\bibitem[Cohen et~al.(2021)Cohen, Kaur, Li, Kolter, and
  Talwalkar]{cohen2021gradient}
Jeremy~M Cohen, Simran Kaur, Yuanzhi Li, J~Zico Kolter, and Ameet Talwalkar.
\newblock Gradient descent on neural networks typically occurs at the edge of
  stability.
\newblock \emph{arXiv preprint arXiv:2103.00065}, 2021.

\bibitem[Daniely et~al.(2016)Daniely, Frostig, and Singer]{daniely2016towards}
Amit Daniely, Roy Frostig, and Yoram Singer.
\newblock Toward deeper understanding of neural networks: The power of
  initialization and a dual view on expressivity.
\newblock In \emph{Advances in Neural Information Processing Systems},
  volume~29, 2016.

\bibitem[De and Smith(2020)]{de2020batch}
Soham De and Sam Smith.
\newblock Batch normalization biases residual blocks towards the identity
  function in deep networks.
\newblock \emph{Advances in Neural Information Processing Systems},
  33:\penalty0 19964--19975, 2020.

\bibitem[Dohare et~al.(2021)Dohare, Mahmood, and Sutton]{dohare2021continual}
Shibhansh Dohare, A~Rupam Mahmood, and Richard~S Sutton.
\newblock Continual backprop: Stochastic gradient descent with persistent
  randomness.
\newblock \emph{arXiv preprint arXiv:2108.06325}, 2021.

\bibitem[Dohare et~al.(2023)Dohare, Hernandez-Garcia, Rahman, Sutton, and
  Mahmood]{dohare2023loss}
Shibhansh Dohare, Juan Hernandez-Garcia, Parash Rahman, Richard Sutton, and
  A~Rupam Mahmood.
\newblock Loss of plasticity in deep continual learning.
\newblock \emph{JMLR}, 2023.

\bibitem[Fort et~al.(2020)Fort, Dziugaite, Paul, Kharaghani, Roy, and
  Ganguli]{fort2020deep}
Stanislav Fort, Gintare~Karolina Dziugaite, Mansheej Paul, Sepideh Kharaghani,
  Daniel~M Roy, and Surya Ganguli.
\newblock Deep learning versus kernel learning: an empirical study of loss
  landscape geometry and the time evolution of the neural tangent kernel.
\newblock \emph{Advances in Neural Information Processing Systems},
  33:\penalty0 5850--5861, 2020.

\bibitem[Gao et~al.(2020)Gao, Biderman, Black, Golding, Hoppe, Foster, Phang,
  He, Thite, Nabeshima, et~al.]{gao2020pile}
Leo Gao, Stella Biderman, Sid Black, Laurence Golding, Travis Hoppe, Charles
  Foster, Jason Phang, Horace He, Anish Thite, Noa Nabeshima, et~al.
\newblock The pile: An 800gb dataset of diverse text for language modeling.
\newblock \emph{arXiv preprint arXiv:2101.00027}, 2020.

\bibitem[Glorot and Bengio(2010)]{glorot2010understanding}
Xavier Glorot and Yoshua Bengio.
\newblock Understanding the difficulty of training deep feedforward neural
  networks.
\newblock In \emph{Proceedings of the thirteenth international conference on
  artificial intelligence and statistics}, pages 249--256. JMLR Workshop and
  Conference Proceedings, 2010.

\bibitem[He et~al.(2015)He, Zhang, Ren, and Sun]{he2015delving}
Kaiming He, Xiangyu Zhang, Shaoqing Ren, and Jian Sun.
\newblock Delving deep into rectifiers: Surpassing human-level performance on
  imagenet classification.
\newblock In \emph{Proceedings of the IEEE international conference on computer
  vision}, pages 1026--1034, 2015.

\bibitem[He et~al.(2016)He, Zhang, Ren, and Sun]{he2016deep}
Kaiming He, Xiangyu Zhang, Shaoqing Ren, and Jian Sun.
\newblock Deep residual learning for image recognition.
\newblock In \emph{Proceedings of the IEEE conference on computer vision and
  pattern recognition}, pages 770--778, 2016.

\bibitem[Henry et~al.(2020)Henry, Dachapally, Pawar, and Chen]{henry2020query}
Alex Henry, Prudhvi~Raj Dachapally, Shubham Pawar, and Yuxuan Chen.
\newblock Query-key normalization for transformers.
\newblock \emph{arXiv preprint arXiv:2010.04245}, 2020.

\bibitem[Hessel et~al.(2018)Hessel, Modayil, Van~Hasselt, Schaul, Ostrovski,
  Dabney, Horgan, Piot, Azar, and Silver]{hessel2018rainbow}
Matteo Hessel, Joseph Modayil, Hado Van~Hasselt, Tom Schaul, Georg Ostrovski,
  Will Dabney, Dan Horgan, Bilal Piot, Mohammad Azar, and David Silver.
\newblock Rainbow: Combining improvements in deep reinforcement learning.
\newblock In \emph{Proceedings of the AAAI conference on artificial
  intelligence}, volume~32, 2018.

\bibitem[Hoffer et~al.(2018)Hoffer, Banner, Golan, and Soudry]{hoffer2018norm}
Elad Hoffer, Ron Banner, Itay Golan, and Daniel Soudry.
\newblock Norm matters: efficient and accurate normalization schemes in deep
  networks.
\newblock \emph{Advances in Neural Information Processing Systems}, 31, 2018.

\bibitem[Hussing et~al.(2024)Hussing, Voelcker, Gilitschenski, Farahmand, and
  Eaton]{hussing2024dissecting}
Marcel Hussing, Claas Voelcker, Igor Gilitschenski, Amir-massoud Farahmand, and
  Eric Eaton.
\newblock Dissecting deep rl with high update ratios: Combatting value
  overestimation and divergence.
\newblock \emph{arXiv preprint arXiv:2403.05996}, 2024.

\bibitem[Ioffe and Szegedy(2015)]{ioffe2015batch}
Sergey Ioffe and Christian Szegedy.
\newblock Batch normalization: Accelerating deep network training by reducing
  internal covariate shift.
\newblock In \emph{International conference on machine learning}, pages
  448--456. pmlr, 2015.

\bibitem[Jacot et~al.(2018)Jacot, Gabriel, and Hongler]{jacot2018neural}
Arthur Jacot, Franck Gabriel, and Cl{\'e}ment Hongler.
\newblock Neural tangent kernel: Convergence and generalization in neural
  networks.
\newblock \emph{Advances in neural information processing systems}, 31, 2018.

\bibitem[Kodryan et~al.(2022)Kodryan, Lobacheva, Nakhodnov, and
  Vetrov]{kodryan2022training}
Maxim Kodryan, Ekaterina Lobacheva, Maksim Nakhodnov, and Dmitry~P Vetrov.
\newblock Training scale-invariant neural networks on the sphere can happen in
  three regimes.
\newblock \emph{Advances in Neural Information Processing Systems},
  35:\penalty0 14058--14070, 2022.

\bibitem[Kumar et~al.(2023)Kumar, Marklund, and Van~Roy]{kumar2023maintaining}
Saurabh Kumar, Henrik Marklund, and Benjamin Van~Roy.
\newblock Maintaining plasticity via regenerative regularization.
\newblock \emph{arXiv preprint arXiv:2308.11958}, 2023.

\bibitem[LeCun et~al.(2002)LeCun, Bottou, Orr, and
  M{\"u}ller]{lecun2002efficient}
Yann LeCun, L{\'e}on Bottou, Genevieve~B Orr, and Klaus-Robert M{\"u}ller.
\newblock Efficient backprop.
\newblock In \emph{Neural networks: Tricks of the trade}, pages 9--50.
  Springer, 2002.

\bibitem[Lee et~al.(2023)Lee, Cho, Kim, Gwak, Kim, Choo, Yun, and
  Yun]{lee2023plastic}
Hojoon Lee, Hanseul Cho, Hyunseung Kim, Daehoon Gwak, Joonkee Kim, Jaegul Choo,
  Se-Young Yun, and Chulhee Yun.
\newblock Plastic: Improving input and label plasticity for sample efficient
  reinforcement learning.
\newblock In \emph{Thirty-seventh Conference on Neural Information Processing
  Systems}, 2023.

\bibitem[Lewkowycz et~al.(2020)Lewkowycz, Bahri, Dyer, Sohl-Dickstein, and
  Gur-Ari]{lewkowycz2020large}
Aitor Lewkowycz, Yasaman Bahri, Ethan Dyer, Jascha Sohl-Dickstein, and Guy
  Gur-Ari.
\newblock The large learning rate phase of deep learning: the catapult
  mechanism.
\newblock \emph{arXiv preprint arXiv:2003.02218}, 2020.

\bibitem[Li et~al.(2020{\natexlab{a}})Li, Chen, and Yang]{li2020understanding}
Xiang Li, Shuo Chen, and Jian Yang.
\newblock Understanding the disharmony between weight normalization family and
  weight decay.
\newblock In \emph{Proceedings of the AAAI Conference on Artificial
  Intelligence}, pages 4715--4722, 2020{\natexlab{a}}.

\bibitem[Li and Arora(2020)]{li2020exponential}
Zhiyuan Li and Sanjeev Arora.
\newblock An exponential learning rate schedule for deep learning.
\newblock In \emph{8th International Conference on Learning Representations,
  ICLR 2020}, 2020.

\bibitem[Li et~al.(2020{\natexlab{b}})Li, Lyu, and Arora]{li2020reconciling}
Zhiyuan Li, Kaifeng Lyu, and Sanjeev Arora.
\newblock Reconciling modern deep learning with traditional optimization
  analyses: The intrinsic learning rate.
\newblock \emph{Advances in Neural Information Processing Systems},
  33:\penalty0 14544--14555, 2020{\natexlab{b}}.

\bibitem[Lobacheva et~al.(2021)Lobacheva, Kodryan, Chirkova, Malinin, and
  Vetrov]{lobacheva2021periodic}
Ekaterina Lobacheva, Maxim Kodryan, Nadezhda Chirkova, Andrey Malinin, and
  Dmitry~P Vetrov.
\newblock On the periodic behavior of neural network training with batch
  normalization and weight decay.
\newblock \emph{Advances in Neural Information Processing Systems},
  34:\penalty0 21545--21556, 2021.

\bibitem[Lyle et~al.(2021)Lyle, Rowland, and Dabney]{lyle2021understanding}
Clare Lyle, Mark Rowland, and Will Dabney.
\newblock Understanding and preventing capacity loss in reinforcement learning.
\newblock In \emph{International Conference on Learning Representations}, 2021.

\bibitem[Lyle et~al.(2023)Lyle, Zheng, Nikishin, Pires, Pascanu, and
  Dabney]{lyle2023understanding}
Clare Lyle, Zeyu Zheng, Evgenii Nikishin, Bernardo~Avila Pires, Razvan Pascanu,
  and Will Dabney.
\newblock Understanding plasticity in neural networks.
\newblock In \emph{International Conference on Machine Learning}, 2023.

\bibitem[Lyle et~al.(2024)Lyle, Zheng, Khetarpal, van Hasselt, Pascanu,
  Martens, and Dabney]{lyle2024disentangling}
Clare Lyle, Zeyu Zheng, Khimya Khetarpal, Hado van Hasselt, Razvan Pascanu,
  James Martens, and Will Dabney.
\newblock Disentangling the causes of plasticity loss in neural networks.
\newblock \emph{arXiv preprint arXiv:2402.18762}, 2024.

\bibitem[Martens et~al.(2021)Martens, Ballard, Desjardins, Swirszcz, Dalibard,
  Sohl-Dickstein, and Schoenholz]{martens2021rapid}
James Martens, Andy Ballard, Guillaume Desjardins, Grzegorz Swirszcz, Valentin
  Dalibard, Jascha Sohl-Dickstein, and Samuel~S Schoenholz.
\newblock Rapid training of deep neural networks without skip connections or
  normalization layers using deep kernel shaping.
\newblock \emph{arXiv preprint arXiv:2110.01765}, 2021.

\bibitem[Nikishin et~al.(2022)Nikishin, Schwarzer, D’Oro, Bacon, and
  Courville]{nikishin2022primacy}
Evgenii Nikishin, Max Schwarzer, Pierluca D’Oro, Pierre-Luc Bacon, and Aaron
  Courville.
\newblock The primacy bias in deep reinforcement learning.
\newblock In \emph{International Conference on Machine Learning}, pages
  16828--16847. PMLR, 2022.

\bibitem[Nikishin et~al.(2023)Nikishin, Oh, Ostrovski, Lyle, Pascanu, Dabney,
  and Barreto]{nikishin2023deep}
Evgenii Nikishin, Junhyuk Oh, Georg Ostrovski, Clare Lyle, Razvan Pascanu, Will
  Dabney, and Andr{\'e} Barreto.
\newblock Deep reinforcement learning with plasticity injection.
\newblock \emph{International Conference on Learning Representations}, 2023.

\bibitem[Paperno et~al.(2016)Paperno, Kruszewski, Lazaridou, Pham, Bernardi,
  Pezzelle, Baroni, Boleda, and Fern{\'a}ndez]{paperno2016lambada}
Denis Paperno, Germ{\'a}n Kruszewski, Angeliki Lazaridou, Quan~Ngoc Pham,
  Raffaella Bernardi, Sandro Pezzelle, Marco Baroni, Gemma Boleda, and Raquel
  Fern{\'a}ndez.
\newblock The lambada dataset: Word prediction requiring a broad discourse
  context.
\newblock \emph{arXiv preprint arXiv:1606.06031}, 2016.

\bibitem[Papyan et~al.(2020)Papyan, Han, and Donoho]{papyan2020prevalence}
Vardan Papyan, XY~Han, and David~L Donoho.
\newblock Prevalence of neural collapse during the terminal phase of deep
  learning training.
\newblock \emph{Proceedings of the National Academy of Sciences}, 117\penalty0
  (40):\penalty0 24652--24663, 2020.

\bibitem[Poole et~al.(2016)Poole, Lahiri, Raghu, Sohl-Dickstein, and
  Ganguli]{poole2016exponential}
Ben Poole, Subhaneil Lahiri, Maithra Raghu, Jascha Sohl-Dickstein, and Surya
  Ganguli.
\newblock Exponential expressivity in deep neural networks through transient
  chaos.
\newblock In D.~Lee, M.~Sugiyama, U.~Luxburg, I.~Guyon, and R.~Garnett,
  editors, \emph{Advances in Neural Information Processing Systems}, volume~29.
  Curran Associates, Inc., 2016.
\newblock URL
  \url{https://proceedings.neurips.cc/paper/2016/file/148510031349642de5ca0c544f31b2ef-Paper.pdf}.

\bibitem[Quan and Ostrovski(2020)]{dqnzoo2020github}
John Quan and Georg Ostrovski.
\newblock {DQN} {Zoo}: Reference implementations of {DQN}-based agents, 2020.
\newblock URL \url{http://github.com/deepmind/dqn_zoo}.

\bibitem[Raffel et~al.(2020)Raffel, Shazeer, Roberts, Lee, Narang, Matena,
  Zhou, Li, and Liu]{raffel2020exploring}
Colin Raffel, Noam Shazeer, Adam Roberts, Katherine Lee, Sharan Narang, Michael
  Matena, Yanqi Zhou, Wei Li, and Peter~J Liu.
\newblock Exploring the limits of transfer learning with a unified text-to-text
  transformer.
\newblock \emph{The Journal of Machine Learning Research}, 21\penalty0
  (1):\penalty0 5485--5551, 2020.

\bibitem[Raghu et~al.(2017)Raghu, Poole, Kleinberg, Ganguli, and
  Sohl-Dickstein]{raghu2017expressive}
Maithra Raghu, Ben Poole, Jon Kleinberg, Surya Ganguli, and Jascha
  Sohl-Dickstein.
\newblock On the expressive power of deep neural networks.
\newblock In \emph{international conference on machine learning}, pages
  2847--2854. PMLR, 2017.

\bibitem[Salimans and Kingma(2016)]{salimans2016weight}
Tim Salimans and Durk~P Kingma.
\newblock Weight normalization: A simple reparameterization to accelerate
  training of deep neural networks.
\newblock \emph{Advances in neural information processing systems}, 29, 2016.

\bibitem[Sap et~al.(2019)Sap, Rashkin, Chen, LeBras, and
  Choi]{sap2019socialiqa}
Maarten Sap, Hannah Rashkin, Derek Chen, Ronan LeBras, and Yejin Choi.
\newblock Socialiqa: Commonsense reasoning about social interactions.
\newblock \emph{arXiv preprint arXiv:1904.09728}, 2019.

\bibitem[Saxe et~al.(2013)Saxe, McClelland, and Ganguli]{saxe2013exact}
Andrew~M Saxe, James~L McClelland, and Surya Ganguli.
\newblock Exact solutions to the nonlinear dynamics of learning in deep linear
  neural networks.
\newblock \emph{arXiv preprint arXiv:1312.6120}, 2013.

\bibitem[Schrittwieser et~al.(2020)Schrittwieser, Antonoglou, Hubert, Simonyan,
  Sifre, Schmitt, Guez, Lockhart, Hassabis, Graepel,
  et~al.]{schrittwieser2020mastering}
Julian Schrittwieser, Ioannis Antonoglou, Thomas Hubert, Karen Simonyan,
  Laurent Sifre, Simon Schmitt, Arthur Guez, Edward Lockhart, Demis Hassabis,
  Thore Graepel, et~al.
\newblock Mastering atari, go, chess and shogi by planning with a learned
  model.
\newblock \emph{Nature}, 588\penalty0 (7839):\penalty0 604--609, 2020.

\bibitem[Schulman et~al.(2017)Schulman, Wolski, Dhariwal, Radford, and
  Klimov]{schulman2017proximal}
John Schulman, Filip Wolski, Prafulla Dhariwal, Alec Radford, and Oleg Klimov.
\newblock Proximal policy optimization algorithms.
\newblock \emph{arXiv preprint arXiv:1707.06347}, 2017.

\bibitem[Simonyan and Zisserman(2014)]{simonyan2014very}
Karen Simonyan and Andrew Zisserman.
\newblock Very deep convolutional networks for large-scale image recognition.
\newblock \emph{arXiv preprint arXiv:1409.1556}, 2014.

\bibitem[Smith et~al.(2020)Smith, Elsen, and De]{smith2020generalization}
Samuel Smith, Erich Elsen, and Soham De.
\newblock On the generalization benefit of noise in stochastic gradient
  descent.
\newblock In \emph{International Conference on Machine Learning}, pages
  9058--9067. PMLR, 2020.

\bibitem[Sodhani et~al.(2020)Sodhani, Chandar, and Bengio]{sodhani2020toward}
Shagun Sodhani, Sarath Chandar, and Yoshua Bengio.
\newblock Toward training recurrent neural networks for lifelong learning.
\newblock \emph{Neural computation}, 32\penalty0 (1):\penalty0 1--35, 2020.

\bibitem[Sokar et~al.(2023)Sokar, Agarwal, Castro, and Evci]{sokar2023dormant}
Ghada Sokar, Rishabh Agarwal, Pablo~Samuel Castro, and Utku Evci.
\newblock The dormant neuron phenomenon in deep reinforcement learning.
\newblock \emph{ICML}, 2023.

\bibitem[Van~Laarhoven(2017)]{van2017l2}
Twan Van~Laarhoven.
\newblock L2 regularization versus batch and weight normalization.
\newblock \emph{arXiv preprint arXiv:1706.05350}, 2017.

\bibitem[Vaswani et~al.(2017)Vaswani, Shazeer, Parmar, Uszkoreit, Jones, Gomez,
  Kaiser, and Polosukhin]{vaswani2017attention}
Ashish Vaswani, Noam Shazeer, Niki Parmar, Jakob Uszkoreit, Llion Jones,
  Aidan~N Gomez, {\L}ukasz Kaiser, and Illia Polosukhin.
\newblock Attention is all you need.
\newblock \emph{Advances in neural information processing systems}, 30, 2017.

\bibitem[Wortsman et~al.(2023)Wortsman, Liu, Xiao, Everett, Alemi, Adlam,
  Co-Reyes, Gur, Kumar, Novak, et~al.]{wortsman2023small}
Mitchell Wortsman, Peter~J Liu, Lechao Xiao, Katie Everett, Alex Alemi, Ben
  Adlam, John~D Co-Reyes, Izzeddin Gur, Abhishek Kumar, Roman Novak, et~al.
\newblock Small-scale proxies for large-scale transformer training
  instabilities.
\newblock \emph{arXiv preprint arXiv:2309.14322}, 2023.

\bibitem[Xiao et~al.(2020)Xiao, Pennington, and
  Schoenholz]{xiao2020disentangling}
Lechao Xiao, Jeffrey Pennington, and Samuel Schoenholz.
\newblock Disentangling trainability and generalization in deep neural
  networks.
\newblock In \emph{International Conference on Machine Learning}, pages
  10462--10472. PMLR, 2020.

\bibitem[Yang(2019)]{yang2019wide}
Greg Yang.
\newblock Wide feedforward or recurrent neural networks of any architecture are
  gaussian processes.
\newblock \emph{Advances in Neural Information Processing Systems}, 32, 2019.

\bibitem[Zhang et~al.(2021{\natexlab{a}})Zhang, Bengio, Hardt, Recht, and
  Vinyals]{zhang2021understanding}
Chiyuan Zhang, Samy Bengio, Moritz Hardt, Benjamin Recht, and Oriol Vinyals.
\newblock Understanding deep learning (still) requires rethinking
  generalization.
\newblock \emph{Communications of the ACM}, 64\penalty0 (3):\penalty0 107--115,
  2021{\natexlab{a}}.

\bibitem[Zhang et~al.(2021{\natexlab{b}})Zhang, Botev, and
  Martens]{zhang2021deep}
Guodong Zhang, Aleksandar Botev, and James Martens.
\newblock Deep learning without shortcuts: Shaping the kernel with tailored
  rectifiers.
\newblock In \emph{International Conference on Learning Representations},
  2021{\natexlab{b}}.

\end{thebibliography}
\bibliographystyle{plainnat}

%%%%%%%%%%%%%%%%%%%%%%%%%%%%%%%%%%%%%%%%%%%%%%%%%%%%%%%%%%%%%%%%%%%%%%%%%%%%%%%
%%%%%%%%%%%%%%%%%%%%%%%%%%%%%%%%%%%%%%%%%%%%%%%%%%%%%%%%%%%%%%%%%%%%%%%%%%%%%%%
% APPENDIX
%%%%%%%%%%%%%%%%%%%%%%%%%%%%%%%%%%%%%%%%%%%%%%%%%%%%%%%%%%%%%%%%%%%%%%%%%%%%%%%
%%%%%%%%%%%%%%%%%%%%%%%%%%%%%%%%%%%%%%%%%%%%%%%%%%%%%%%%%%%%%%%%%%%%%%%%%%%%%%%
\newpage
\appendix

\section{Derivations}

\subsection{Notation}
Analysis of a network's training dynamics depends on characterizing the evolution of (pre-)activations and gradients in the forward and backward passes respectively. We lay out basic notation for fully-connected layers first, and then note additional details which must be considered for convolutional, skip connection, and attention layers. We will write the parameters of a layer as $\theta_l$, and use $f$ to denote a neural network.

\textbf{Fully-connected layers:} We write the forward pass through a network $f:\mathbb{R}^{d_0} \rightarrow \mathbb{R}^{d_{L}}$ as a composition of layer-wise computations $f^l : \mathbb{R}^{d_{l-1}} \rightarrow \mathbb{R}^{d_l}$ of the form:
\begin{equation}
    a^{l} = f^l (a^{l - 1}) =  \phi( \sigma^l \layernorm(h^l) + \mu^l), \quad h^l = W^l a^{l - 1}  \; W^l = \theta_l \;
\end{equation}
where $\phi$ denotes a nonlinearity and $\layernorm$ is a (possibly absent) normalization operator. We let $a^0$ denote the network inputs. We will refer to a forward pass through a subset of a network with the notation ${f^{l_1:l_2} = f^{l_2}\circ \dots \circ f^{l_1}}$, and use interchangeably $f = f^{1:L}$. We will refer to the full set of network parameters by $\theta = \mathrm{vec}(\{\theta^l\}_{l=1}^L)$.

\textbf{Convolutional layers:} since a convolution can be viewed as a parameterization of a matrix with a particular symmetry, we express these layers identically to the fully-connected layers, with a change in semantics such that $W^l$ is the matrix representation of the convolutional parameters $\theta_l$, where we write the embedding of $\theta_l$ into a matrix as $W_{\mathrm{conv}}(\theta_l)$. For simplicity, we ignore the choice of padding.
\begin{align}
     \phi( \sigma^l \layernorm(h^l) + \mu^l) , \quad h^l = W^l a^{l-1}  \; \text{ and } W^l = W_{\mathrm{conv}}(\theta_l)
\end{align}

\textbf{Skip-connect layers:} in some network architectures, a nonlinearity is placed on the outputs of two subnetworks to produce a function of the form
\begin{equation}
    f^l = \phi \bigg (\sum_{l_i \in L} a_{l_i} \bigg ) \quad \text{or} \quad  f^l = \phi\bigg (\layernorm\bigg( \sum_{l_i \in L} a_{l_i} \bigg ) \bigg )
\end{equation}
for some index set $L$. The choice of whether to apply normalization to the sum of the subnetwork outputs or to each output individually depends on the desired signal propagation properties of the network~\citep{de2020batch}. 

\subsection{Derivation of Proposition 1}
\label{sec:prop-proof}
The result described in proposition 1 follows straightforwardly from the chain rule. For pre-activation RMSNorm we have
\begin{align}
    \frac{d}{dh_i} \phi(\rmsnorm(h))_j &= \phi'(\rmsnorm(h))_j \frac{d}{dh_i} \rmsnorm(h)_j \\
    \frac{d}{dh_i} \rmsnorm(h)_j  &= \frac{d}{dh_i} \frac{h_j}{\bigg (\sum h_k^2 \bigg ) ^{1/2}} \\
    &= -\frac{1}{2} \frac{h_j}{\bigg (\sum h_k^2 \bigg ) ^{3/2}}  \frac{d}{dh_i}  \sum h_k^2  \\
    &= -\frac{h_j}{\bigg (\sum h_k^2 \bigg ) ^{3/2}}  h_i \\
    \frac{d}{dh_i} \phi(\rmsnorm(h))_j  &= - \frac{ \phi'(\rmsnorm(h))_j  h_j h_i}{\|h\|^3}
\end{align}
% whereas for post-activation RMSNorm we have
% \begin{align}
%     \frac{d}{dh_i} \rmsnorm(\phi(h))_j &= 
% \end{align}

\subsection{Gradients and signal propagation}

One perspective which can provide some additional insight into our approach is to consider the following decomposition of the gradient being backpropagated through a layer.
\begin{align}
\nabla_{W_k} \loss (\theta) &={\color{red}{ \partial_{a_k} \loss(\theta)} }\cdot {\partial_{W_k} \phi(\layernorm(W_k a_{k-1}))} \\
    &= \color{red}{ \partial_{a_k} \loss(\theta)}\color{blue}{\cdot D_{\dot{\phi}}} \color{purple}{\nabla_{h_k} \layernorm(h_k))} \color{brown}{a_{k-1}^\top } \label{eq:grad-decomp}
\end{align}

With this decomposition, we obtain the following interpretation of some common pathologies.

\textbf{Saturated nonlinearities:} a saturated nonlinearity implies that $\color{blue}{ D_{\dot{\phi}}}$, and the gradient is exactly (in the case of ReLU) or very close to (e.g. tanh) zero. As a result, $u_t$ will be zero for the affected coordinates and the corresponding parameters will remain frozen. \themethod addresses this problem by using Layer or RMSNorm prior to any nonlinearity in the network.

\textbf{Saturated normalization layers:} the normalization transformation $x \mapsto \frac{x}{\|x\|}$ is vulnerable to saturation as $\|x\|$ grows, in the sense that for a fixed-norm update $u$ we will have
\begin{equation}
  \lim_{\|x\| \rightarrow \infty} \frac{x + u}{\|x + u\|}  - \frac{x }{\|x\|} = 0 \;
\end{equation}
and so the term $\color{purple}{\nabla_{h_k} \layernorm(h_k))}$ will vanish. In this case, while the optimizer will be able to update the parameters, these updates will have a diminishing effect on the network's output. 

\textbf{Vanishing and exploding gradients:} divergence and disappearance of the activations $a_{k-1}$ and backpropagated gradients $\partial_{a_k} \loss(\theta)$ are well-known pathologies which can make networks untrainable. However, even networks which start training from a well-tuned initialization may still encounter exploding gradients due to parameter norm growth over time \citep{dohare2021continual, wortsman2023small}, or vanishing gradients and activations, such as in the case of saturated (i.e. dead/dormant) ReLU units~\citep{sokar2023dormant}.

\subsection{Details of \themethod}
\label{sec:method-details}

\textbf{Guiding principles:} in general, the goal of \themethod is to avoid dramatic distribution shifts in the pre-activation and parameter norms, and to ensure that the network can perform updates to parameters even if a nonlinearity is saturated. With these in mind, there are two key properties that a network designer should aim to maintain:
\begin{enumerate}
    \item All \textbf{parametric} functions entering a nonlinearity should have a normalization layer that at ensures the gradients of all units' parameters are correlated. If there are no parameters between nonlinearities (as is sometimes the case in e.g. resnets) normalization is not essential.
    \item Based on our investigations in Appendix~\ref{sec:rmsnorm-dead-units}, \textit{L2 normalization} of the pre-activations is crucial to obtain the positive benefits of layer normalization, while centering does not have noticeable effects on the network's robustness to unit saturation. As a result, applying at least RMSNorm is crucial prior to nonlinearities, but the choice of whether or not to incorporate centering is up to the designer's discretion.
\end{enumerate}

\textbf{Batch normalization layers:} by default, we put layer normalization prior to batch normalization if an architecture already incorporates batch normalization prior to a nonlinearity. This preserves the property of batch norm that individual units have mean zero across the batch, which may not be the case if layer normalization is applied after. We also always omit offset parameters if layernorm is succeeded by batchnorm, as these offset parameters will be zeroed out by batchnorm.

\textbf{Skip-connect layers:} provided that layer normalization is applied to the outputs of a linear transformation prior to a nonlinearity, \themethod is agnostic to whether normalization is applied prior to or after a residual connection's outputs are added to the output of a layer. In particular, if we have a layer of the form $\phi(a_1 + a_2)$ and $a_1$ and $a_2$ are the outputs of some subnetwork of the form $\phi_1(\layernorm(h_1))$ and $\phi_2(\layernorm(h_2))$ where $\phi_1$ and $\phi_2$ are (possibly trivial) activation functions, then the relevant parameters will already benefit from Proposition~\ref{prop:dead-units} and it is not necessary to add an additional normalization layer prior to the activation $\phi$.

\textbf{Attention layers:} unlike linear layers, attention layers do not typically include bias terms. To analogize this common practice in \themethod, we omit the offset and mean subtraction components of the LayerNorm transform, obtaining
\begin{align}
    \mathrm{MHA}(W_Q X, W_K X) \mapsto \mathrm{MHA}(\sigma_Q\rmsnorm(W_Q X), \sigma_K \rmsnorm (W_K X))
\end{align}
where crucially $\rmsnorm$ is not applied along the token axis as this can lead to leakage of information during training on next-token-prediction objectives. A similar problem also prevents us from using normalization directly on the QK product matrix, along with the observation that the intuition of normalizing vectors so that their dot product is equal to the cosine similarity is lost once the dot products have already occurred \citep{henry2020query}. Empirically, we find that the scale parameters $\sigma_Q$ and $\sigma_K$ don't seem to be strictly necessary for expressivity, and that networks can even form selective attention masks for in-context learning without using these parameters to further saturate the softmax.

\subsection{Dynamics of \themethod}

\textbf{Weight projection non-interference:} \themethod incorporates a projection onto the ball of constant norm after each update step. A natural question is whether this projection step might simply be the inverse of the update step, leaving the parameters of the network constant. Fortunately, we note that the normalization layers have the effect of projecting gradients onto a subspace which is orthogonal to the current parameter values, i.e.
\begin{equation}
     \bigg (\nabla_\bx \rmsnorm(\bx)\bigg )(\bx)= \mathbf{0} \; .
\end{equation}
We also note that except for extreme situations such as Neural Collapse~\citep{papyan2020prevalence}, real-world gradient updates are almost never colinear with the parameters, meaning that even without normalization layers this problem would be unlikely.

Another concern that arises from the constraints we place on the weights and features is the possibility that these constraints will limit the network's expressivity. 
Normalization does remove the ability to distinguish colinear inputs of differing norms, meaning that the inputs $\bx$ and $\alpha \bx$ will map to the same output for all $\alpha$; however, since many data preprocessing pipelines already normalize inputs, we argue this is not a significant limitation. Indeed, under a more widely-used notion of expressivity, the number of \textit{activation patterns}~\citep{raghu2017expressive},  \themethod does not limit expressivity at all. While straightforward, we provide a formal statement and proof of this claim in Appendix~\ref{sec:expressivity}.

\textbf{Layer normalization and parameter growth:} In fact, if we incorporate normalization layers into the network we might expect an even more aggressive decay schedule. Recall that in a scale invariant network, we have $\langle \nabla_\theta f(\theta), \theta \rangle = 0$. Thus we know that the gradient at each time step will be orthogonal to the current parameters. In an idealized setting where we use the update rule $\theta_{t+1} \gets \theta_t + \alpha \frac{\nabla_\theta \ell(\theta_t)}{\|\nabla_\theta \ell(\theta_t)\|}$, this would result in the parameter norm growing at a rate $\Theta(t)$, corresponding to an effectively linear learning rate decay.

\subsection{Scale-invariance and layer-wise gradient norms}
\label{sec:scale-invar-layerwise}
One benefit of \themethod is that, because we normalize layer outputs, we limit the extent to which divergence in the norm of one layer's parameters can propagate to the gradients of other layers. For e.g. linear homogeneous activations such as ReLUs, the gradient of some objective function for some input with respect to the parameters of a particular layer contains a sum of matrix products whose norm will depend multilinearly on the norm of each matrix. In particular, in the simplified setting of a deep linear network where $f(\btheta, \bx) = \prod W^l \bx $, we recall \citet{saxe2013exact}
\begin{align}
    \nabla_{W^l} f (\btheta; \bx) &= \bigg[\prod_{k > l} W^k\bigg ]^\top\bx^\top \bigg [\prod_{k < l} W^k \bigg ]^\top \\
    \intertext{In particular, with $\btheta' = W^1, \dots, cW^k, \dots, W^L$, for $k \neq l$ we would have}
    \implies  \nabla_{W^l} f(\btheta'; \bx) &= c \nabla_{W^l} f(\btheta ; \bx) \\
    \intertext{The situation changes little if we add ReLU nonlinearities to the network. In this case, we use the notation $\bD_{\phi_l}(\bx)$ to denote the diagonal matrix indicating whether $a^l[i] > 0$}
    \nabla_{W^l} f (\btheta; \bx) &= \bigg[\prod_{k > l} \bD_{\phi_k}(\bx)W^k\bigg ]^\top\bx^\top \bigg [\prod_{k < l} \bD_{\phi_k}(\bx) W^k \bigg ]^\top \\
    \implies  \nabla_{W^l} f(\btheta'; \bx) &= c \nabla_{W^l} f(\btheta ; \bx) \\
    \intertext{If we incorporate a normalization layer at the end of the network (for simplicity we consider RMSNorm here, but a similar argument applies to standard LayerNorm), the scale-invariance of the resulting output means that the norms of each layer's gradients are independent of the norms of the other layers' parameters, i.e.}
    \nabla_{W^l} \rmsnorm \circ f (\btheta; \bx) &= \nabla_{W^l} \rmsnorm \circ f (\btheta'; \bx) \; \text{ whenever } \btheta' = (W_1, \dots, cW_k, \dots, W_L), k \neq l 
\end{align}
This property is appealing as it means that growth or decay of the norm of a single layer will not interfere with the dynamics of the others. However, it does mean that a layer's effective learning rate will still be sensitive to scaling, which motivates our use of renormalization. It also does not help to avoid saturated units, motivating our use of layer normalization prior to nonlinearities.

\subsection{Expressivity of \themethod}

\label{sec:expressivity}
Finally, we discuss the effect of normalization and weight projection on a notion of expressivity known as the number of activation patterns~\citep{raghu2017expressive} exhibited by a neural network. This quantity relates to the complexity of the function class a network can compute, giving the following result the corollary that \themethod doesn't interfere with this notion of expressivity. 
\begin{proposition}
Let $f$ be a fully-connected network with ReLU nonlinearities. Let $\tilde{f}$ be the function computed by $f$ after applying \themethod. Then the activation pattern of a particular architecture $f$ and parameter $\theta$ be $\mathcal{A}_\theta$, we have
\begin{equation}
   \mathcal{A}_f(\theta, \bx) = \mathcal{A}_{\tilde{f}}(N(\theta), \bx) \; .
\end{equation}
Further, the decision boundary $\max_{i \in d_{out}} f_i(\bx)$ is preserved under \themethod.
\end{proposition}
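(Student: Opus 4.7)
The plan is to prove both parts of the proposition by induction on the layer index, exploiting three structural properties that together ensure \themethod only introduces per-layer positive rescalings that either cancel out (inside \layernorm) or preserve the quantities of interest. These are: (i) the positive homogeneity of ReLU, $\phi(cx)=c\phi(x)$ for $c>0$; (ii) the scale-invariance of layer normalization, $\layernorm(ch)=\layernorm(h)$ for $c>0$; and (iii) the fact that \themethod's WeightProject operation rescales $W^l$ by a positive scalar $c_l=\rho_l/\|W^l\|$ and rescales the learnable scale/offset pair $(\sigma^l,\mu^l)$ by a \emph{common} positive scalar $\alpha_l=\sqrt{d/(\|\sigma^l\|^2+\|\mu^l\|^2)}$ (as written in Algorithm~\ref{alg:themethod}).

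The induction hypothesis I would maintain is that at each layer $l$ the post-activation in $\tilde{f}$ equals $\beta_l$ times the post-activation in $f$, with $\beta_l>0$; the base case $\beta_0=1$ is immediate. For the inductive step, the pre-normalization value satisfies $h^l_{\tilde{f}} = c_l\beta_{l-1}\, h^l_f$, and scale-invariance of \layernorm makes the normalized quantity identical to that in $f$. Applying the rescaled $(\sigma^l,\mu^l)$ produces a pre-ReLU value equal to $\alpha_l(\sigma^l\layernorm(h^l_f)+\mu^l)$; since $\alpha_l>0$, the sign pattern of the ReLU input is unchanged, and positive homogeneity of ReLU then yields $a^l_{\tilde{f}}=\alpha_l\, a^l_f$, so I can take $\beta_l=\alpha_l>0$.

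The first claim follows immediately: since the ReLU input at each layer in $\tilde{f}$ differs from that in $f$ only by a positive scalar, their sign patterns coincide, giving $\mathcal{A}_f(\theta,\bx)=\mathcal{A}_{\tilde{f}}(N(\theta),\bx)$. For the decision boundary, the output layer typically has no subsequent normalization, so the logits in $\tilde{f}$ reduce to $c_L\beta_{L-1}$ times those in $f$. This common positive rescaling preserves $\operatorname{argmax}_{i\in d_{\mathrm{out}}}$, and hence the decision surface $\{\bx : \operatorname{argmax}_i f_i(\bx)=k\}$ is identical for every class $k$.

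The main obstacle I anticipate is purely bookkeeping around the LayerNorm affine parameters: one has to verify that WeightProject's treatment of $(\sigma^l,\mu^l)$ amounts to a single positive scalar that commutes past the affine map $x\mapsto \sigma^l x + \mu^l$, and that this scalar is strictly positive on every iterate (which it is, since $\|\sigma^l\|^2+\|\mu^l\|^2>0$ at initialization and the update keeps it that way). The induction is essentially forced once this is established, but it is important to state the scope carefully: for non-homogeneous activations (GeLU, sigmoid) the positive homogeneity step fails, and if \layernorm were applied after rather than before the nonlinearity the $c_l$ rescaling would no longer be absorbed, so the proposition genuinely requires the pre-activation normalization placement that \themethod prescribes.
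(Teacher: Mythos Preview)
Your induction handles only half of \themethod. The argument you give is correct for the WeightProject step: assuming both networks already carry \layernorm, multiplying $W^l$ by $c_l>0$ and $(\sigma^l,\mu^l)$ by a common $\alpha_l>0$ produces a positive rescaling of the activations at each layer, and scale-invariance of \layernorm\ absorbs the $c_l\beta_{l-1}$ factor exactly as you describe. But the proposition compares a \emph{vanilla} ReLU network $f$ (no normalization layers, no $(\sigma,\mu)$ parameters) to $\tilde f$, the network obtained after \themethod\ has \emph{inserted} normalization and then projected. Your inductive step derives the pre-ReLU value in $\tilde f$ as $\alpha_l(\sigma^l\layernorm(h^l_f)+\mu^l)$ and then asserts $a^l_{\tilde f}=\alpha_l\, a^l_f$; for this to hold you would need $\phi(\sigma^l\layernorm(h^l_f)+\mu^l)=\phi(h^l_f)$, which is precisely the part of the claim you have not argued. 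Property~(ii), $\layernorm(ch)=\layernorm(h)$, points in the wrong direction here: it says normalization is invariant under scaling its input, whereas what is needed is that normalization \emph{is} a positive scaling of its input.

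The paper's proof supplies exactly this missing step, via the identity $\phi(\rmsnorm(\bh))=\phi(\bh)/\|\bh\|$, which holds because $\rmsnorm(\bh)=\bh/\|\bh\|$ is multiplication by a positive scalar and ReLU is positive-homogeneous. Iterating over layers gives $\tilde f(\bx)=f(\bx)/\prod_{l}\|\bh_l(\bx)\|$, an input-dependent but strictly positive rescaling of the logits, from which both the activation-pattern equality and the preservation of $\operatorname{argmax}_i f_i(\bx)$ follow immediately. Note that this step genuinely requires $\rmsnorm$ rather than full \layernorm\ with centering (subtracting the mean can flip signs of individual coordinates), and requires $\mu^l=0$, which holds because \themethod\ inserts fresh scale/offset parameters into $f$; your treatment of $(\sigma^l,\mu^l)$ as pre-existing parameters of $f$ that merely get rescaled is another symptom of the same misreading.
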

\begin{proof} We apply an inductive argument on each layer. In particular, when $\phi$ is a ReLU nonlinearity we have
\begin{align*}
    \mathcal{A}(\phi(\bh)) &= \mathcal{A}(\phi(\rmsnorm (\bh))) \\
    \phi(\rmsnorm(\bh)) &= \frac{\phi(\bh)}{\|\bh\|} \\
    \tilde{f}(\bx) &= \frac{1}{\Pi_{l=1}^L \|\bh_l(\bx)\|}f(\bx) \\
\end{align*}
which trivially results in identical activation patterns in the normalized and unnormalized networks. It is worth noting that one distinguishing factor from a standard ReLU network is that the resulting scaling factor will be different for each $\bx$. Thus while the activation patterns will be the same, the two different inputs $\bx$ $\by$ might have different scaling factors, which will be a nonlinear function of the input. \themethod networks, even with ReLU activations, thus do not have the property of being piecewise linear.
\end{proof}

\subsection{Rescaling scale/offset parameters (linear homogeneous networks)}
\label{sec:rescaling-scales}
We observe that for any $c>0$, letting $\phi(x) = \max(x, 0)$ we have:
\begin{align}
   \layernorm( \phi(W \sigma \bx + \mu)) &= \layernorm( c\phi(W \sigma \bx + \mu)) \\
   &= \layernorm(\phi(cW (\sigma \bx + \mu))) \text{ by homogeneity of ReLU} \\
   &= \layernorm(\phi(W (c\sigma \bx + c\mu)))  \\
   \intertext{Then we obtain analogous effective learning rates, letting}
   g_{c\mu} &= \nabla_{\mu} f(c\sigma, c\mu; \bx) \quad g_{c\sigma} =\nabla_{\sigma} f(c\sigma, c\mu; \bx)
   \intertext{we then have equivalent updates for $\| \sigma^2 + \mu^2 \| = 1$ and $\eta_c = c^2$}
   \layernorm( (c\sigma + \eta_c g_{c\sigma})\bx + c\mu + \eta_c g_{c\mu}) &= \layernorm( (c\sigma + c^2 g_{c\sigma})\bx + c\mu + c^2 g_{c\mu}) \\
   &= \layernorm( (c\sigma + c^2 \frac{1}{c} g_{\sigma} + c\mu + c^2 \frac{1}{c}g_{\mu}) \\
   &= \layernorm (c( (\sigma + g_{\sigma})\bx + \mu + g_{\mu}) = \layernorm ((\sigma + g_{\sigma})\bx + \mu + g_{\mu})
   \intertext{Finally, we note that the above also holds if we apply a linear transformation $W$ to the output of the scale-offset transform, since}
   \layernorm(W (c\sigma \bx + c \mu)) &= \layernorm(cW(\sigma \bx + \mu)) =\layernorm(W(\sigma \bx + \mu))
\end{align}
and so the effective learning rate scales precisely as we had previously for linear layers, but now with respect to the joint norm of the scale and offset parameters $\mu, \sigma$.

\section{Experiment details}
\label{appx:experiment-details}

\subsection{Toy experiment details}

We conduct a variety of illustrative experiments on toy problem settings and small networks.

\textbf{Network:} in Figures 1 and 2, we use a DQN-style network which consists of two sets of two convolutional layers with 32 and 64 channels respectively. We then apply max pooling and flatten the output, feeding through a 512-unit hidden linear layer before applying a final linear transform to obtain the output logits. The network uses ReLU nonlinearities. When \themethod is applied, we add layer normalization prior to each nonlinearity.

\subsection{RL details}
\label{appx:RL-details}

\textbf{Single-task atari:} We base our RL experiments off of the publicly available implementation of the Rainbow agent~\citep{hessel2018rainbow} in DQN Zoo~\citep{dqnzoo2020github}. We follow the default hyperparameters detailed in this codebase. In our implementation, we add normalization layers prior to each nonlinearity except for the final softmax. We train for 200M frames on the Atari 57 suite~\citep{bellemare2013arcade}. We also allow for a learning rate schedule, which we explicitly detail in cases where non-constant learning rates are used.

\textbf{Sequential ALE:} we use the same rainbow implementation as for the single-task results, using a cosine decay learning rate for all variants. We restart the cosine decay schedule at every task change for all agents. Our cosine decay schedule uses an init value of $10^{-8}$, a peak value of the default LR for Rainbow (0.000625), 1000 warmup steps after the optimizer is reset, and end-value equal to $10^{-6}$ as in the single-task settings. We choose cosine decay due to its popularity in supervised learning, and to highlight the versatility of \themethod to different LR schedules. We follow the game sequence used by \citet{abbas2023loss}, training for 20M frames per game.

\subsection{Language details}
\label{appx:language-details}

\textbf{Sequence memorization:} we set a dataset size of 1024 and a sequence length of 512. We use a vocabulary size of 256, equivalent to ASCII tokenization. We use the adam optimizer, and train all networks for a minimum of 10 000 steps. We reset the dataset every 1000 optimizer steps, generating a new set of 1024 random strings of length 512. We use as a baseline a transformer architecture \citep{vaswani2017attention, raffel2020exploring} consisting of 4 attention blocks, with 8 heads and $d_{\mathrm{model}}$ equal to 256. We use a batch size of 128.

\textbf{In-context learning:} our in-context learning experiments use the same overall setup as the sequence memorization experiments, with identical architectures and baseline optimization algorithm. In this case we train on a dataset consisting of 4096 randomly generated strings, in which the final 100 tokens are a contiguous subsequence of the first 412, selected uniformly at random from indices in [1, 312]. 

\textbf{Natural language:} we run our natural language experiments on a 400M parameter transformer architecture based on the same backbone as the previous two tasks, this time consisting of 12 blocks with 12 heads and model dimension 1536. We use the standard practice of learning rate warmup followed by cosine decay, setting a peak learning rate of $2\cdot 10^{-4}$ which is reached after a linear warmup of 1000 steps. We use a batch size of 128 and train for 30,000 steps. We use a weight decay parameter of 0.1 with the adamw optimizer.

\subsection{Vision details}
\label{appx:vision_details}

\textbf{CIFAR-10 Memorization:}
We consider three classes of network architecture: a fully-connected multilayer perceptron (MLP), for which we default to a width of 512 and depth of 4 in our evaluations; a convolutional network with $k$ convolutional layers followed by two fully connected layers, for which we default to depth four, 32 channels, and fully-connected hidden layer width of 256. In all networks, we apply layer normalization before batch normalization if both are used at the same time. By default, we typically do not use batch normalization. We use ReLU activations

Our continual supervised learning domain is constructed from the CIFAR-10 dataset, from which we construct a family of continual classification problems. Each continual classification problem is characterized by a transformation function on the labels. For label transformations, we permute classes (for example, all images with the label 5 will be re-assigned the label 2), and random label assignment, where each input is uniformly at random assigned a new label independent of its class in the underlying classification dataset. Figures in the main paper concern random label assignments, as this is a more challenging task which produces more pronounced effects. 

In our figures in the main paper, we run a total of 20M steps and a total of 200 random target resets. All networks are trained using the Adam optimizer. We conducted a sweep over learning rates for the different architectures, settling on $10^{-4}$ as this provided a reasonable balance between convergence speed and stability in all architectures, to ensure that all networks could at least solve the single-task version of each label and target transformation. 

% \textbf{WiLDS:} The WiLDS benchmark consists of a collection of datasets which can be partitioned to induce distribution shifts The specific dataset we use is the \texttt{iwildcam} dataset, which consists of photos of animals taken in a variety of locations and times of day. To generate a series of distribution shifts, we train a network only on data collected from a single location; at fixed intervals, we change the location used to generate the training data, and continue training on this new location. We use 10 locations and repeat this process 20 times. We reuse the CNN architecture used in the CIFAR-10 memorization tasks.

\textbf{VGGNet and ResNet-50 baselines:} we use the standard data augmentation policies for the CIFAR-10 and ImageNet-1k experiments. In our ImageNet-1k experiments, we use the ResNetv2 architecture variant, with a label smoothing parameter of 0.1, weight decay $10^{-4}$, and as an optimizer we use SGD with a cosine annealing learning rate schedule, and Nesterov momentum with decay rate 0.9. We use a batch size of 256. Our CIFAR-10 experiments use a VGG-Net architecture. We use the sgd optimizer with a batch size of 32, and set a learning rate schedule which starts at 0.025 and decays by a factor of 0.1 iteratively through training. We use Nesterov momentum with decay rate 0.9. We train for a total of 400 000 steps.
\section{Additional experimental results}
\subsection{Arcade learning environment}
\label{appx:atari-lrs}
Our choice of learning rate schedule in the paper is motivated by an attempt to roughly approximate the shape of the implicit schedule obtained by parameter growth on average across games in the suite, with a slightly smaller terminal point than would typically be achieved by the parameter norm alone. We consider learning rate schedules which linearly interpolate between the initial learning rate of 0.000625 and a learning rate of $10^{-6}$, which is roughly equivalent to increasing the parameter norm by a factor of 60. We explore the importance of the duration of this decay in Figure~\ref{fig:atari-lr-sweep}, where we conduct a sweep over a subset of the full Atari 57 suite (selected by sorting alphabetically and selecting every third environment). We observe that faster decay schedules result in better performance initially, but often plateau at a lower value. Decaying linearly over the entire course of training, in contrast, exhibits slow initial progress but often picks up significantly towards the end of training. We conclude that in many games, reducing the learning rate is necessary for performance improvements in this agent, and the linear decay over the entire training period doesn't give the agent sufficien time to take advantage of the finer-grained updates to its predictions that a lower learning rate affords.

\begin{figure}
    \centering
    \includegraphics[width=\linewidth]{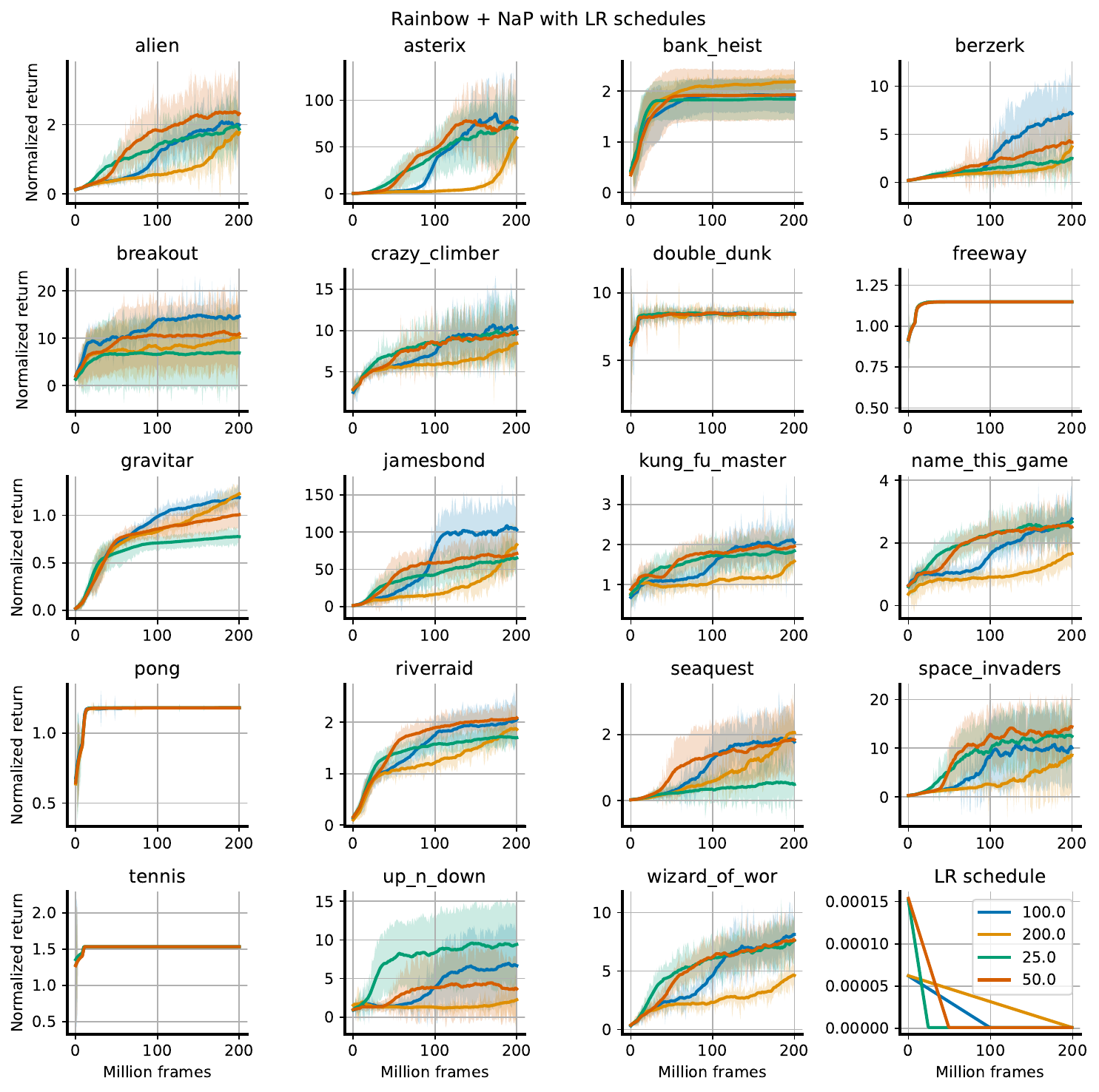}
    \caption{We see that faster LR decays typically accompany fast initial progress followed by plateaus. Terminating the linear schedule halfway through training strikes the best balance of the four settings considered for overall progress.}
    \label{fig:atari-lr-sweep}
\end{figure}
\subsection{Non-stationary MNIST}
We include additional network statistics from the experiments shown in Figure~\ref{fig:memorization} in Figure~\ref{fig:cmnist-stats}. 

\textbf{Linearized units:} given a large batch, what fraction of the ReLU units in the penultimate layer are either 0 for all units or nonzero for all units. This gives a slightly more nuanced take on the amount of computation performed in the penultimate layer than the feature rank.

\textbf{Feature rank:} we compute the numerical rank of the penultimate layer features by sampling a batch of data and computing the singular values of the $b \times d$ matrix of $d-$dimensional feature vectors. $\sigma_1, \dots, \sigma_d$. We then compute the numerical rank as $\sum \mathbbm{1}(\sigma_i/\sigma_1 > 0.01)$.

\textbf{Parameter and gradient norm:} these are both computed in the standard way by flattening out the set of parameters / per-parameter gradients and computing the 2-norm of this vector.

\begin{figure}
    \centering
    \includegraphics[width=\linewidth]{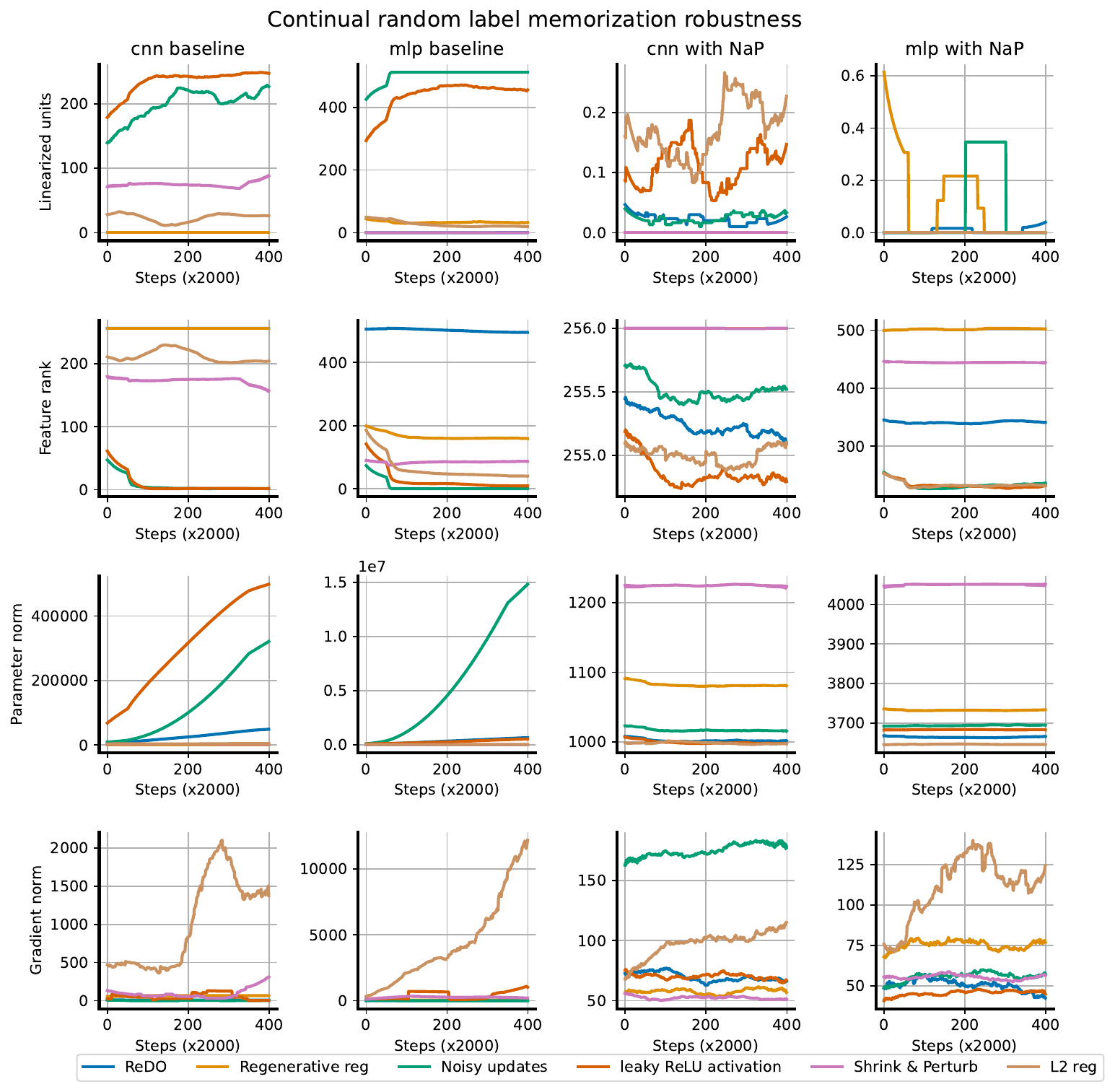}
    \caption{We plot a variety of additional network statistics in the continual CIFAR-10 experiment shown in Figure~\ref{fig:memorization}}
    \label{fig:cmnist-stats}
\end{figure}

\subsection{Non-stationary sequence modeling}
We find additionally that \themethod is capable of improving the robustness of sequence models to nonstationarities, while also not interfering with the formation of in-context learning circuits. We demonstrate the latter point in Figure~\ref{fig:induction-heads}, where we train a small transformer model on a dataset of the form $s_p \oplus s_s$, where $s_p$ is a prefix string of length 100 and $s_s$ is a suffix string of length 50 which is a contiguous subset of the prefix string. We use a fixed dataset, such that in theory the network could memorize all strings, though we use a sufficiently large dataset that this is not possible to achieve within the training budget. We train four protocols using next-token prediction: with and without weight projection, and with and without normalization (the networks are small enough that normalization is not critical for training stability). We evaluate accuracy on the final 48 tokens of $s_s$ as training progresses, and observe that all networks very quickly learn to identify the starting point of the suffix string and copy the relevant subset of the prefix. We observe that normalization accelerates learning of both the retrieval component of the accuracy and the memorization component of the accuracy, and that the weight projection step in fact also accelerates this process.
\begin{figure}
    \centering
    \includegraphics[width=.9\textwidth]{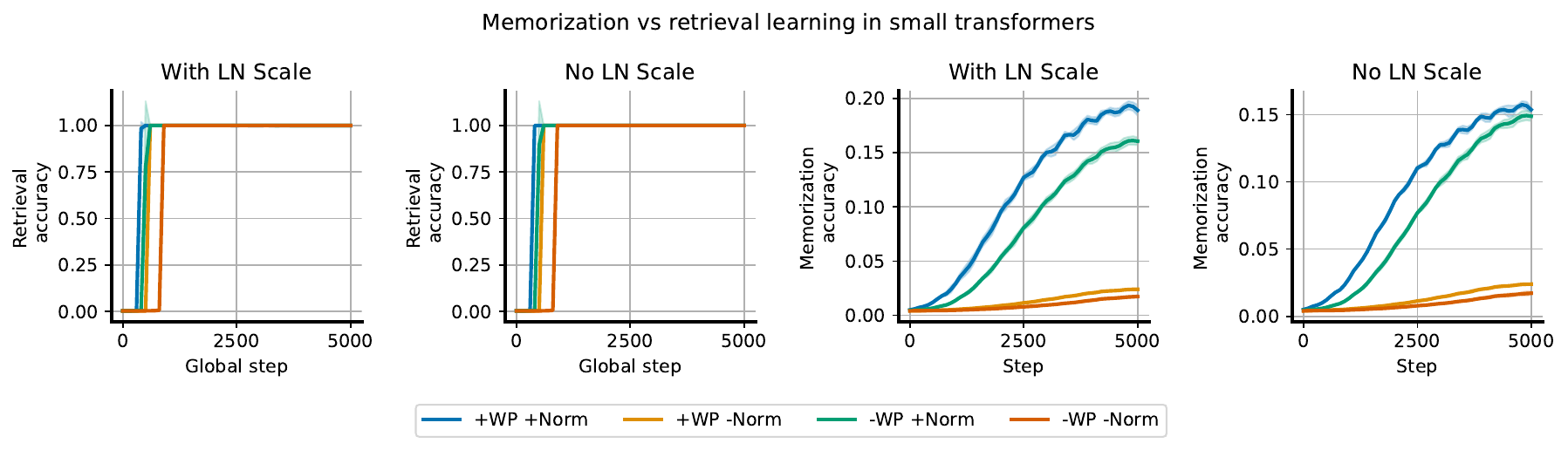}
    \caption{Demonstration that applying normalization and removing the learnable scale parameter does not prevent the network from learning to copy previously-observed subsequences.}
    \label{fig:induction-heads}
\end{figure}

In Figure~\ref{fig:random-string-memorization}, we see that normalization and weight projection can also help to improve robustness in a nonstationary random string memorization task, a sequence-modelling analogue of random label memorization in CIFAR-10. We observe that in this case, allowing a learnable scale to evolve unregularized can somewhat slow down learning compared to omitting this parameter, and that weight projection recovers similar dynamics to learning with a large weight decay factor. 
\begin{figure}
    \centering
    \includegraphics[width=.9\textwidth]{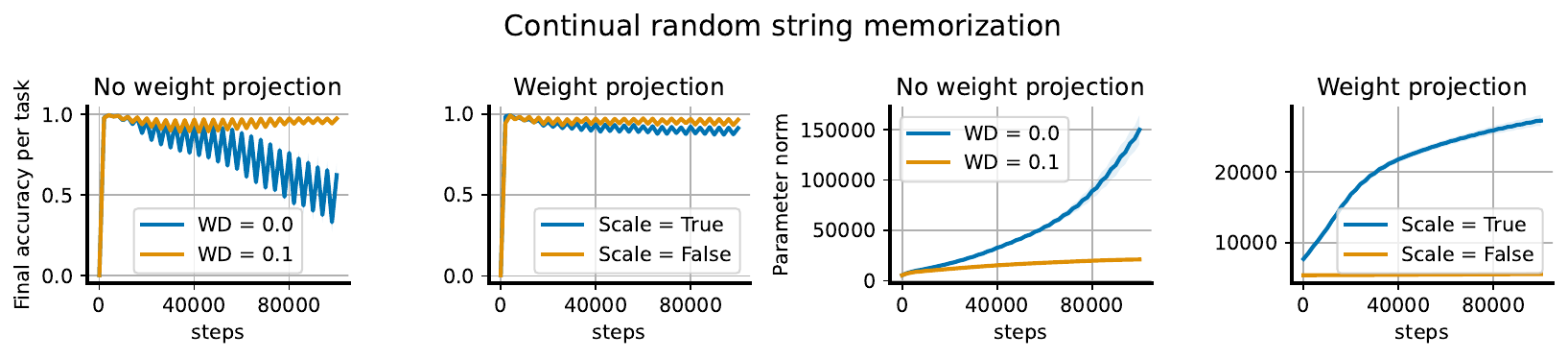}
    \caption{Random string memorization: unregularized networks exhibit plasticity loss when trained to memorize a sequence of random strings, while weight projection and weight decay improve robustness to this nonstationarity.}
    \label{fig:random-string-memorization}
\end{figure}
\subsection{ReLU revival experiments}
\label{sec:rmsnorm-dead-units}
Many previous works have noted that adaptive optimizers are particularly damaging to network plasticity \citep{dohare2021continual, lyle2023understanding, dohare2023loss}. The primary mechanism underlying this is due to the sudden distribution shift in gradient moments due to changes in the learning objective -- when the gradient norm increases, adaptive optimizers are slow to catch up and can take enormous update steps when this occurs.
Layer normalization mitigates this effect due to two facts: first, the gradients of negative preactivations are nonzero, and second, all nonzero gradients are treated essentially the same by adaptive optimizers (up to $\epsilon$).
As a result, networks with layer norm can still perform significant updates to parameters feeding into `dead' units, meaning that these networks have a good chance of turning on again later.

We illustrate this with a simple experimental setting, where we model optimizer updates with isotropic Gaussian-distributed gradient signals and perform a (truncated at zero) random walk. Formally we look at the time evolution of the system:
\begin{equation}
    \bv_t = \bv_{t-1} + \max(\bv, \mathbf{0}) \odot \mathbf{z}_t, \quad \mathbf{z}_t \sim \mathcal{N}(0, I)
\end{equation}
to model the evolution of features under a gradient descent trajectory. To simulate the steps taken by an adaptive optimizer like RMSProp or Adam, where updates to each parameter have fixed norm, we modify this process slightly as follows:

\begin{equation}
        \bv_t = \bv_{t-1} + \mathrm{sign}\left(\max(\bv, \mathbf{0}) \odot \mathbf{z}_t\right), \quad \mathbf{z}_t \sim \mathcal{N}(0, I) \; .
\end{equation}

To simulate layer normalization, we compute the dot product between $\mathbf{z}_t$ and the Jacobian $\nabla_{\bv} \frac{\bv}{\|\bv\|}$ to simulate gradient descent:
\begin{equation}
        \bv_t = \bv_{t-1} +  \mathbf{z}_t^\top \nabla_{\bv}  \max\left(\frac{\bv}{\|\bv\|}, \mathbf{0}\right) , \quad \mathbf{z}_t \sim \mathcal{N}(0, I)
\end{equation}

and analogously compute the sign of this update to model RMSProp-style optimizers:
\begin{equation}
        \bv_t = \bv_{t-1} + \mathrm{sign}\left(\mathbf{z}_t^\top \nabla_{\bv}  \max\left(\frac{\bv}{\|\bv\|}, \mathbf{0}\right)\right) , \quad \mathbf{z}_t \sim \mathcal{N}(0, I)
\end{equation}
In Figure~\ref{fig:layernorm-dead-units} we simulate each of these processes for 1000 steps, and track the number of negative (i.e. `dead') indices. We observe that layer normalization doesn't avoid dead units in GD, but does reduce the rate at which they accumulate. We also observe that layer normalization does avoid monotonic increases in the number of dead units in a model of RMSProp. Intuitively, this makes sense: rather than freezing once they reach a negative value, parameters continue updating once they become negative with equally large steps as they did before, making escape from the dead zone more probable. One visualization of a few trajectories in each setting is shown in Figure~\ref{fig:random-walk-model}.

\begin{figure}
    \centering
    \includegraphics[width=\linewidth]{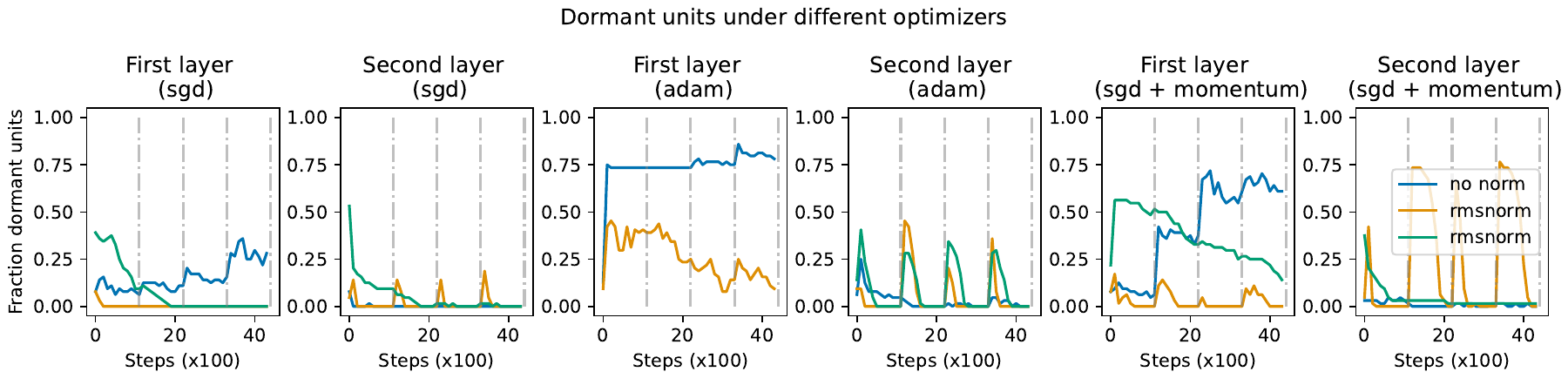}
    
    \caption{Simple MLP model with dead unit recovery after sudden changes in the classification task. We }
    \label{fig:layernorm-dead-units}
\end{figure}

\begin{figure}
    \centering
    \includegraphics[width=\linewidth]{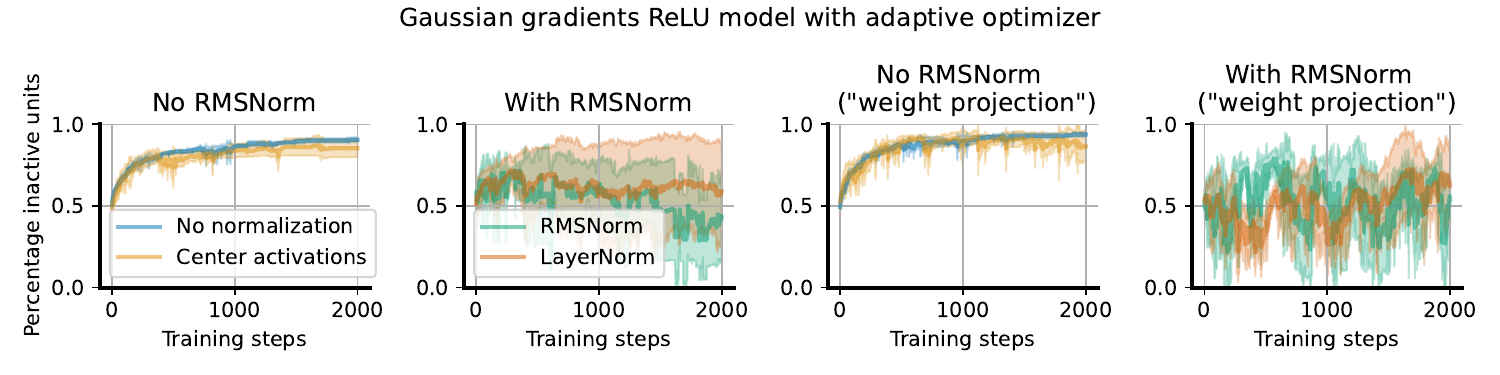}
    \includegraphics[width=\linewidth]{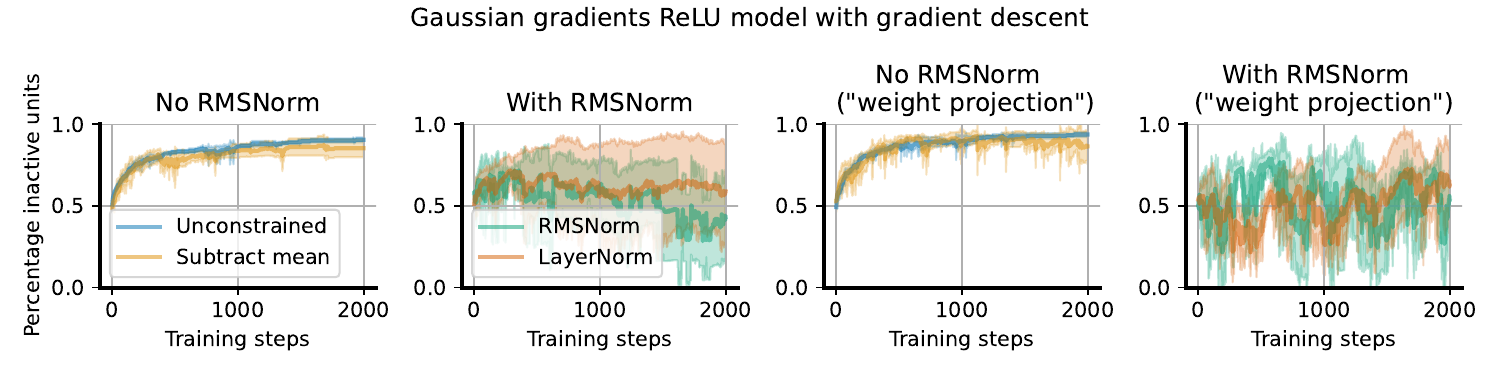}
    \caption{Accumulation of gradients in a random walk model: backpropagated `gradients' are isotropic random Gaussian vectors and updates are computed by taking the product of these vectors with the layer jacobian. We see that the centering transform actually does relatively little to reduce the risk of dead units, and can in fact produce a `winner-take-all' effect wherein one large  }
    \label{fig:random-walk-model}
\end{figure}

\subsection{Stationary supervised benchmarks}
\label{appx:supervised}
We provide the results with standard deviations in Table~\ref{tab:appx_supervised_vision} and Table~\ref{tab:appx_supervised_language}.

\begin{table}\footnotesize
\centering
\begin{tabular}{|l|l|l|}
\cline{1-3}
 & \textbf{CIFAR-10} & \textbf{ImageNet-1k}  \\ \cline{1-3}
\themethod & \textbf{94.64 (0.12)} & 77.26 (0.04) \\ \cline{1-3}
Baseline & \textbf{94.65 (0.08)} & 77.08 (0.11) \\ \cline{1-3}
Norm only & 94.47 (0.18) & \textbf{77.45 (0.08)} \\ \cline{1-3}
\end{tabular}
\vspace{1em}
\caption{
Top-1 prediction accuracy on the test sets of CIFAR-10 and ImageNet-1k. Numbers in parentheses are standard deviations.
}
\label{tab:appx_supervised_vision}
\end{table}

\begin{table}\footnotesize
\centering
\begin{tabular}{|l|l|l|l|l|l|l|}
\cline{1-7} & \textbf{C4} & \textbf{Pile} & \textbf{WikiText} & \textbf{Lambada} & \textbf{SIQA} & \textbf{PIQA} \\ \cline{1-7}
\themethod & \textbf{45.7 (0.0)} & \textbf{47.9 (0.1)} & \textbf{45.4 (0.1)} & \textbf{56.6 (0.4)} & \textbf{44.2 (0.2)} & \textbf{68.8 (0.7)} \\ \cline{1-7}
Baseline & 44.8 (0.0) & 47.4 (0.1) & 44.2 (0.0) & 54.1 (0.2) & 43.5 (0.6) & 67.3 (0.2)  \\ \cline{1-7}
Norm only & 44.9 (0.0) & 47.6 (0.1) & 44.3 (0.0) & 53.6 (0.3) & 43.8 (0.6) & 67.1 (0.4) \\ \cline{1-7}
\end{tabular}
\vspace{1em}
\caption{
Per-token accuracy of a 400M transformer model pretrained on the C4 dataset, evaluated on a variety of language benchmarks. Numbers in parentheses are standard deviations.
}
\label{tab:appx_supervised_language}
\end{table}

\section{A note on scale and offset parameters}

\label{sec:ln-scale-offset}
One loose end from our presentation of \themethod is what to do with the learnable scale and offset terms, which are not by default projected and so may drift from their initial values.  Most supervised tasks are too short for this drift to present problems, and adding layer-specific regularization or normalization adds additional engineering overhead to an experiment. 
However, in deep RL or in the synthetic continual tasks we present in Figure~\ref{fig:memorization}, this is a real concern. In the case of homogeneous activations such as ReLU, the scale and offset parameters can be viewed identically to the weight and bias terms and normalized accordingly, noting that now all that matters is the relative ratio of the scale and the offset. To account for this, we propose to treat the joint set $\sigma, \mu$ as a single parameter to be normalized. This resolves the issues involved with normalizing a parameter to an initial value of zero, and can be shown not to change the network output (see Appendix~\ref{sec:rescaling-scales} for a derivation of this fact). With non-homogeneous nonlinearities, however, this property will not hold, and we suggest in the general case to use mild weight decay towards a the initial values of 1 and 0 for the scale and offset terms respectively. These two approaches can be summarized in the following two update rules: 

\begin{equation}
    \mathcal{U}_{\mathrm{norm}} (\sigma , \mu) \ = \frac{(\sigma, \mu)}{\sqrt{\|\sigma\|^2 + \|\mu\|^2}} \quad \text{ and } \quad 
  \mathcal{U}_{\mathrm{decay}}^\alpha (\sigma, \mu) = ( \alpha \sigma + (1-\alpha) \mathbbm{1}, \alpha\mu ) \; .
\end{equation}

Most of our evaluations on single tasks do not use any regularization or projection of the scale and offset parameters, but we do include a regularization-based approach in our evaluations on the sequential ALE in Section~\ref{sec:experiments}. In general, if it did not appear that the scale/offset drift was causing problems, we did not introduce additional complexity by adding regularization or normalization. Indeed, in many cases a simpler solution was to omit these parameters entirely; for example we observed that removing offsets was beneficial in several, though not all, games in the Arcade Learning Environment.

% \section{Experiments}
% \begin{figure*}
% \begin{minipage}{0.385\linewidth}
%     \includegraphics[width=\linewidth]{figures/tuned_image_classification.pdf}
%     \includegraphics[width=\linewidth]{figures/retrieval.pdf}
% \end{minipage}
% \begin{minipage}{0.6\linewidth}
%      \includegraphics[width=\linewidth]{figures/language_models_v2.pdf}
% \end{minipage}
%     \caption{TODO: figure out how to visualize these results better. Can probably bump induction heads to appendix and combine image + text evals to a single row.}
%     \label{fig:largeish_lms}
% \end{figure*}

% \subsection{Rainbow agents on Atari}

\begin{figure}
    \centering
    \includegraphics[width=\linewidth]{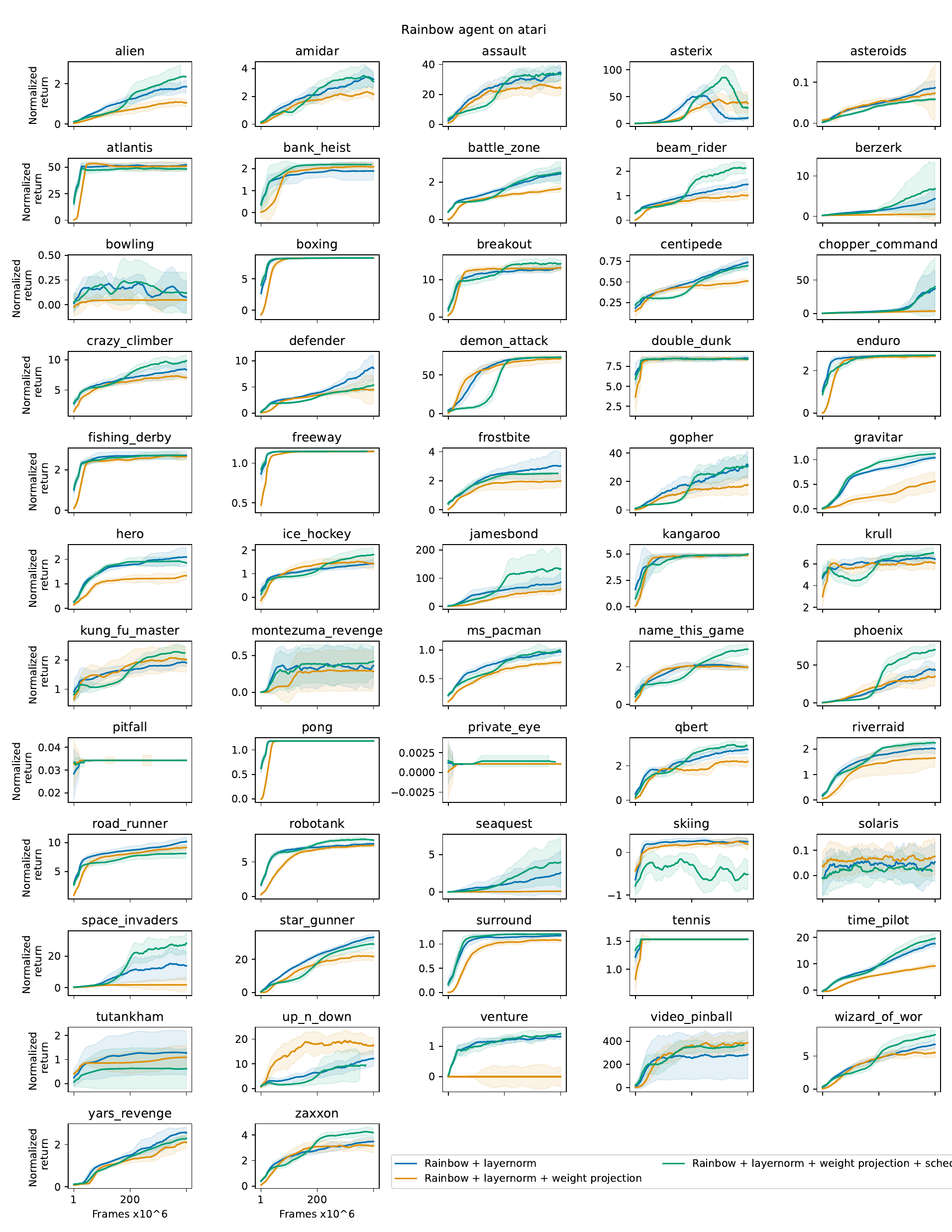}
    \caption{Rainbow agents on atari.}
    \label{fig:rainbow-all-atari}
\end{figure}

%%%%%%%%%%%%%%%%%%%%%%%%%%%%%%%%%%%%%%%%%%%%%%%%%%%%%%%%%%%%%%%%%%%%%%%%%%%%%%%
%%%%%%%%%%%%%%%%%%%%%%%%%%%%%%%%%%%%%%%%%%%%%%%%%%%%%%%%%%%%%%%%%%%%%%%%%%%%%%%
% \input{checklist}
\end{document}